\newcommand{\expectation}{\mathbb{E}}
\newcommand{\flow}{\Gamma}
\setlist{nolistsep,leftmargin=*}
\def\gc{\textsc{Gcomb}\xspace}
\def\gt{\textsc{Gcn-TreeSearch}\xspace}
\def\sv{S2V-DQN\xspace}
\newtheorem{thm}{\textbf{Theorem}}
\newtheorem{defn}{\textbf{Definition}}
\newtheorem{lem}{\textbf{Lemma}}
\newtheorem{proof}{Proof}
\title{Learning Heuristics over Large Graphs via Deep Reinforcement Learning}
\author{%
Sahil Manchanda, Akash Mittal\thanks{denotes equal contribution}, Anuj Dhawan$^{*}$  \\
   Indian Institute of Technology Delhi \\
   \texttt{\{sahil.manchanda,cs1150208,Anuj.Dhawan.cs115\}@cse.iitd.ac.in}\\
      \AND
  Sourav Medya$^1$, Sayan Ranu$^2$, Ambuj Singh$^3$ \\
   $^1$Northwestern University,$^2$Indian Institute of Technology Delhi\\   $^3$University of California Santa Barbara \\
   $^1$\texttt{sourav.medya@kellogg.northwestern.edu} \\
   $^2$\texttt{sayanranu@cse.iitd.ac.in,$^3$ambuj@ucsb.edu}\\
}
\begin{document}

\maketitle

\begin{abstract}
    There has been an increased interest in discovering heuristics for combinatorial problems on graphs through machine learning. While existing techniques have primarily focused on obtaining high-quality solutions, scalability to billion-sized graphs has not been adequately addressed. In addition, the impact of budget-constraint, which is necessary for many practical scenarios, remains to be studied. In this paper, we propose a framework called \gc to bridge these gaps.  
\gc trains a Graph Convolutional Network (GCN) using a novel \emph{probabilistic greedy} mechanism to predict the quality of a node.
To further facilitate the combinatorial nature of the problem, \gc utilizes a $Q$-learning framework, which is made efficient through \emph{importance sampling}.
We perform extensive experiments on real graphs to benchmark the efficiency and efficacy of \gc. Our results establish that \gc is $100$ times faster and marginally better in quality than state-of-the-art algorithms for learning combinatorial algorithms. Additionally, a case-study on the practical combinatorial problem of Influence Maximization (IM) shows \gc is $150$ times faster than the specialized IM algorithm IMM with similar quality. 
\end{abstract}

\section{Introduction and Related Work}
\label{sec:intro}
\vspace{-0.10in}

Combinatorial optimization problems on graphs appear routinely in various applications such as viral marketing in social networks \cite{kempe2003maximizing,edgeadd}, computational sustainability \cite{dilkina2011}, health-care \cite{Wilder2018aamas}, and infrastructure deployment~\cite{medya2018noticeable,netclus1,netclus2,infocom}. In these \emph{set combinatorial problems}, the goal is to identify the set of nodes that optimizes a given objective function. These optimization problems are often NP-hard. Therefore, designing an exact algorithm is infeasible and polynomial-time algorithms, with or without approximation guarantees, are often desired and used in practice \cite{jung2012irie,imm}. Furthermore, these graphs are often dynamic in nature and the approximation algorithms need to be run repeatedly at regular intervals. Since real-world graphs may contain millions of nodes and edges, this entire process becomes tedious and time-consuming. 

%

To provide a concrete example, consider the problem of viral marketing on social networks through \emph{Influence Maximization} \cite{benchmarking, kempe2003maximizing}. Given a budget $b$, the goal is to select $b$ nodes (users)  such that their endorsement of a certain product (ex: through a tweet) is expected to initiate a cascade that reaches the largest number of nodes in the graph. This problem is NP-hard~\cite{kempe2003maximizing}. Advertising through social networks is a common practice today and needs to solved repeatedly due to the graphs being dynamic in nature. Furthermore, even the greedy approximation algorithm does not scale to large graphs~\cite{benchmarking} resulting in a large body of research work \cite{imm,jung2012irie,celf,pmc,kempe2003maximizing,chen2010scalable,wang2012scalable,cohen2014sketch}.


At this juncture, we highlight two key observations. First, although the graph is changing, the underlying model generating the graph is likely to remain the same. Second, the nodes that get selected in the answer set of the approximation algorithm may have certain properties common in them. 
Motivated by these observations, we ask the following question \cite{dai2017learning}: \textit{Given a set combinatorial problem $P$ on graph $G$ and its corresponding solution set $S$, can we learn an approximation algorithm for problem $P$ and solve it on an unseen graph that is similar to $G$?} 
\vspace{-0.1in}

\subsection{Limitations of Existing Work}
\label{sec:relatedwork}
\vspace{-0.1in}

The above observations were first highlighted by \sv \cite{dai2017learning}, where they 
 show that it is indeed possible to \emph{learn} combinatorial algorithms on graphs. Subsequently, an improved approach was proposed in \gt \cite{li2018combinatorial}. Despite these efforts, there is scope for further improvement. 

$\bullet$ \textbf{Scalability}: The primary focus of both \gt and \sv have been on obtaining quality that is as close to the optimal as possible. Efficiency studies, however, are limited to graphs containing only hundreds of thousands nodes. To provide a concrete case study, we apply \gt for the Influence Maximization problem on the YouTube social network. We observe that \gt takes  one hour on a graph containing a million edges (Fig.~\ref{fig:time_nips}; we will revisit this experiment in \S~\ref{sec:im}). 
Real-life graphs may contain billions of edges (See. Table~\ref{table:data_description}). 

$\bullet$ \textbf{Generalizability to real-life combinatorial problems: } 
\gt proposes a learning-based heuristic for the Maximal Independent Set problem (MIS).  When the combinatorial problem is not MIS, \gt suggests that we map that problem to MIS. Consequently, for problems that are not easily mappable to MIS, the efficacy is compromised (ex: Influence Maximization). 

$\bullet$ \textbf{Budget constraints: } Both \gt and \sv solve the decision versions of combinatorial problems (Ex. set cover, vertex cover). 
In real life, we often encounter their budget-constrained versions, such as max-cover and Influence Maximization \cite{kempe2003maximizing}. 


 Among other related work, Gasse et al.~\cite{gasse2019exact} used GCN for learning branch-and-bound variable selection policies, whereas Prates et al. \cite{prates2019learning} focused on solving Travelling Salesman Problem. However, the proposed techniques in these papers do not directly apply to our setting of set combinatorial problems.


\vspace{-0.10in}
\subsection{Contributions} 
\label{sec:contributions}

At the core of our study lies the observation that although the graph may be large, only a small percentage of the nodes are likely to contribute to the solution set. Thus, pruning the search space is as important as prediction of the solution set. Both \sv \cite{dai2017learning} and \gt \cite{li2018combinatorial} have primarily focused on the prediction component. In particular, \sv learns an end-to-end neural model on the entire graph through reinforcement learning. The neural model integrates node embedding and $Q$-learning into a single integrated framework. Consequently, the model is bogged down by a large number of parameters, which needs to be learned on the entire node set. As a result, we will show in \S.~\ref{sec:experiments} that \sv fails to scale to graphs beyond $20,000$ nodes.

  On the other hand, \gt employs a two-component framework: \textbf{(1)} a graph convolutional network (GCN) to learn and predict the individual \emph{value} of each node, and \textbf{(2)} a \emph{tree-search} component to analyze the dependence among nodes and identify the solution set that collectively works well. Following tree-search, GCN is repeated on a reduced graph and this process continues iteratively. This approach is not scalable to large graphs since due to repeated iterations of GCN and TreeSearch where each iteration of tree-search has $O(|E|)$ complexity; $E$ is the set of edges. 
  
  Our method \gc builds on the observation that computationally expensive predictions should be attempted only for promising nodes. Towards that end, \gc has two separate components: \textbf{(1)} a GCN to prune \emph{poor} nodes and learn embeddings of \emph{good} nodes in a \emph{supervised} manner, and \textbf{(2)} a $Q$-learning component that focuses only on the \emph{good} nodes to predict the solution set. 
     Thus, unlike \sv, \gc uses a \emph{mixture} of supervised and reinforcement learning, and does not employ an end-to-end architecture. Consequently, the prediction framework is lightweight with a significantly reduced number of parameters. 
     
     When compared to \gt, although both techniques use a GCN, in \gc, we train using a novel \emph{probabilistic greedy} mechanism. Furthermore, instead of an iterative procedure of repeated GCN and TreeSearch calls, \gc performs a single forward pass through GCN during inference. In addition, unlike TreeSearch, which is specifically tailored for the MIS problem, \gc is problem-agnostic \footnote{We are, however, limited to set combinatorial problems only.}. Finally, unlike both \sv and \gt, \gc uses lightweight operations to prune \emph{poor} nodes and focus expensive computations only on nodes with a high potential of being part of the solution set. The pruning of the search space not only enhances scalability but also removes noise from the search space leading to improved prediction quality. Owing to these design choices, {\bf (1)} \gc is scalable to billion-sized graphs and up to $100$ times faster, {\bf (2)} on average, computes higher quality solution sets than \sv and \gt, and {\bf (3)} improves upon the state-of-the-art  algorithm for Influence Maximization on social networks.

\section{Problem Formulation}
\label{sec:formulation}


\textbf{Objective:} \textit{Given a budget-constrained set combinatorial problem $P$ over graphs drawn from distribution $D$, \emph{learn} a heuristic to solve problem $P$ on an unseen graph $G$ generated from $D$. }

Next, we describe three instances of  budget-constrained set combinatorial problems on graphs.

\textbf{Maximum Coverage Problem on bipartite graph (MCP):}
Given a bipartite graph $G=(V,E)$, where $V=A\cup B$, 
  and a budget $b$, find a set $S^*\subseteq A$ of $b$ nodes such that  coverage is maximized. The coverage of set $S^*$ is defined as $f(S^*)=\frac{|X|}{|B|}$, where $X=\{j|(i,j)\in E,i \in S^*, j\in B\}$.\\
\textbf{Budget-constrained Maximum Vertex Cover (MVC): }
Given a graph $G=(V,E)$ and a budget $b$, find a set $S^*$ of $b$ nodes such that the coverage $f(S^*)$ of $S^*$ is maximized. $f(S^*)=\frac{|X|}{|E|}$, where $X=\{(i,j)|(i,j)\in E,i \in S^*, j\in V\}$. 

\textbf{Influence Maximization (IM) \cite{benchmarking}: }
			Given a budget $b$, a social network $G$, and a information diffusion model $\mathcal{M}$, select a set $S^*$ of $b$ nodes such that the expected diffusion spread $f(S^*)=\expectation[\flow(S^*)]$ is maximized. (See App. A in supplementary for more details).
		      

\vspace{-0.05in}





\label{sec:gcomb}

\begin{figure}[t]
\centering
    \includegraphics[width=5in]{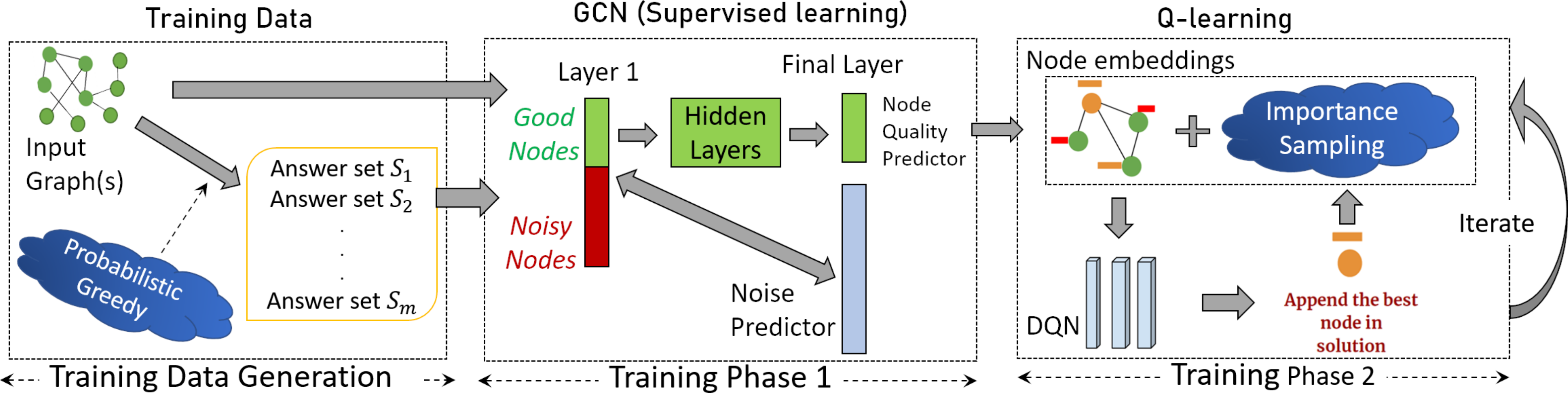}
    \caption{The flowchart of the training phase of \gc.}
     \label{fig:example1}
\end{figure}

\section{GCOMB}
\label{sec:gcomb_det}
\vspace{-0.10in}

The input to the training phase is a set of graphs and the optimization function $f(\cdot)$ corresponding to the combinatorial problem in hand. The output is a sequence of two separate neural graphs, GCN~\cite{hamilton2017inductive} and $Q$-learning network, with their corresponding learned parameters $\Theta_G$ and $\Theta_Q$ respectively. In the testing phase, the inputs include a graph $G=(V,E)$, the optimization function $f(\cdot)$ and the budget $b$. The  output of the testing part is the solution set of nodes constructed using the learned neural networks. 
Fig.~\ref{fig:example1} presents the training pipeline. We will now discuss each of the phases. 

\vspace{-0.10in}
\subsection{Generating Training Data for GCN}
\label{sec:gcn}



Our goal is to learn node embeddings that can predict ``quality'', and thereby, identify those nodes that are likely to be part of the answer set. 
We could adopt a classification-based method, where, given a training graph $G=(V,E)$, budget $b$ and its solution set $S$, a node $v$ is called \emph{positive} if $v\in S$; otherwise it is negative.
This approach, however, assumes all nodes that are not a part of $S$ to be equally bad. In reality, this may not be the case. Consider the case where $f(\{v_1\})$=$f(\{v_2\})$, but the marginal gain of node $v_2$ given $S=\{v_1\}$, i.e., $f(\{v_1,v_2\})-f(\{v_1\})$, is $0$ and vice versa. In this scenario, only one of $v_1$ and $v_2$ would be selected in the answer set although both are of equal quality on their own. 

\textbf{Probabilistic greedy:} To address the above issue, we \emph{sample} from the \emph{solution space} in a greedy manner and learn embeddings that reflect the \emph{marginal gain} $f(S\cup \{v\})-f(S)$ provided by a node $v$ towards the solution set $S$ (Alg. 2 in Appendix). To sample from the solution space, 
in each iteration, instead of selecting the node with the highest marginal gain, we choose a node with probability proportional to its marginal gain. The probabilistic greedy algorithm runs $m$ times to construct $m$ different solution sets $\mathbb{S}=\{S_1,\cdots,S_m\}$ and the score of node $v\in V$ is set to: 

\vspace{-0.10in}
\begin{equation}
\label{eq:score}
score(v)=\frac{\sum_i^m gain_i(v)}{\sum_i^m f(S_i)}
\end{equation}
\vspace{-0.10in}

Here, $gain_i(v)$ denotes the marginal gain contribution of $v$ to $S_i$. Specifically, assume $v$ is added to $S_i$ in the $(j+1)_{th}$ iteration and let $S_i^j$ be the set of nodes that were added in the first $j$ iterations while constructing $S_i$. Then, $gain_i(v)=f\left(S_i^j\cup \{v\}\right)-f\left(S_i^j\right)$. 
In our experiments, $m$ is set to 30 for all three problems of MCP, MVC and IM.

\textbf{Termination condition of probabilistic greedy: }
Probabilistic greedy runs till \emph{convergence} of the marginal gains, i.e., $gain_i(v)\leq \Delta$, where $\Delta$ is a small value. The goal here is to identify all nodes that could potentially be part of the solution set for \emph{any} given budget.   
$\Delta$ in our experiments is set to $0.01$ for all three problems of MCP, MVC and IM.  

 



\vspace{-0.10in} 
\subsection{Training the GCN} 
\vspace{-0.10in} 

Our goal in this phase is two-fold: \textbf{(1) } Identify nodes that are unlikely to be part of the solution set and are therefore noise in the context of our problem; {\bf (2)} Learn a predictive model for node quality.

\textbf{Noise predictor: } 
The noise predictor should be lightweight so that expensive computations are reserved only for the good  nodes. With this goal, we exploit the first layer information of the GCN and learn a classifier to predict for a given budget $b$, whether a node can be safely pruned without affecting the quality of the solution set. Typically, the first layer of a GCN contains the raw features of nodes that are relevant for the problem being solved. In \gc, we use the summation of the outgoing edge weights as node features. Let $\boldsymbol{x}_v$ denote the total outgoing edge weight of node $v$. 
To learn the noise predictor, given a set of training graphs $\{G_1,\cdots,G_t\}$, we first sort all nodes based on $\boldsymbol{x}_v$. Let $rank(v,G_i)$ denote the position of $v$ in the sorted sequence based on $\boldsymbol{x}_v$ in $G_i$. Furthermore, let $S^i_j$ denote the $j^{th}$ solution set constructed by probabilistic greedy on $G_i$. Given a budget $b$, $S^j_{G_i,b}\subseteq S^i_j$ denotes the subset containing the first $b$ nodes added to $S^i_j$ by probabilistic greedy.  
 Therefore, $r^b_{G_i}=\max_{j=0}^m\left\{\max_{\forall v\in S^j_{G_i,b}}\left\{rank(v,G_i)\right\}\right\}$ represents the lowest rank of any node in a solution set of budget $b$ in $G_i$. This measure is further generalized to all training graphs in the form of $r^b_{max}=\max_{\forall G_i}\left\{r^b_{G_i}\right\}$,  which represents the lowest rank of any node that has a realistic chance of being included in an answer set of budget $b$. 
To generalize across budgets, we compute $r^{b_i}_{max}$ for a series of budgets $\{b_1,\cdots,b_{max}\}$, where  $b_{max}=\max_{\forall G_i}\left\{\max_{j=0}^m\left\{|S^i_j|\right\}\right\}$. On this data, we can perform curve fitting~\cite{curve} to predict $r^{b}_{max}$ for any (unseen) budget $b$. In our experiments, we use linear interpolation. To generalize across graph sizes, all of the above computations are performed on \emph{normalized} budgets, where $b$ is expressed in terms of the proportion of nodes with respect to the node set size of the graph. Similarly, rank $rank(v,G_i)$ is expressed in terms of percentile.

\label{sec:Training the GCN}
\textbf{Node quality predictor: }To train the GCN, we sample a training graph $G_i=(V_i,E_i)$ and a (normalized) budget $b$ from the range $(0,b^i_{max}]$, where $b^i_{max}=\max_{j=0}^m\left\{\frac{|S^i_j|}{|V_i|}\right\}$. This tuple is sent to the noise predictor to obtain the good (non-noisy) nodes. The GCN parameters ($\Theta_G$) are next learned by minimizing the loss function only on the good nodes. Specifically, for each good node $v$, we want to learn embeddings that can predict $score(v)$ through a surrogate function $score'(v)$. Towards that end, we draw multiple samples of training graphs and budgets, and the parameters are learned by minimizing the \emph{mean squared error} loss (See Alg.3 for detailed pseudocode in the Supplementary).  
 

\vspace{-0.05in}
\begin{equation}
J(\Theta_G)=\sum_{\sim\langle G_i, b \rangle}\frac{1}{|V^g_i|}\sum_{\forall v\in V^g_i}(score(v)-score'(v))^2
\end{equation}
In the above equation,  $V^g_i$ denotes the set of good nodes for budget $b$ in graph $G_i$. Since GCNs are trained through message passing, in a GCN with $K$ hidden layers, the computation graph is limited to the induced subgraph formed by the $K$-hop neighbors of $V_i^g$, instead of the entire graph.

\vspace{-0.10in}
\subsection{Learning $Q$-function}
\label{sec:qlearning}
\vspace{-0.10in}

While GCN captures the individual importance of a node, $Q$-learning \cite{sutton2018reinforcement} learns the combinatorial aspect in a budget-independent manner. Given a set of nodes $S$ and a node $v\not\in S$, we predict the $n$-step reward, $Q_n(S,v)$, for adding $v$ to set $S$ (action) via the surrogate function $Q'_n(S,v;\Theta_Q)$. 

\textbf{Defining the framework:} 
We define the $Q$-learning task in terms of state space, action, reward, policy and termination with the input as a set of nodes and their predicted scores.
    
$\bullet$ \textbf{State space: } The state space characterizes the state of the system at any time step $t$ in terms of the candidate nodes being considered, i.e., $C_t=V^g\setminus S_t$, with respect to the partially computed solution set $S_t$; $V^g$ represents the set of good nodes from a training graph. In a combinatorial problem over nodes, two factors have a strong influence: \textbf{(1)} the individual quality of a node, and \textbf{(2)} its \emph{locality}. The quality of a node $v$ is captured through $score'(v)$. Locality is an important factor since two high-quality nodes from the same neighborhood may not be good collectively. 
The locality of a node $v\in C_t$ ($C_t=V^g\setminus S_t$) is defined as:
\begin{equation}
\label{eq:loc}
loc(v,S_t)=|N(v)\setminus \cup_{\forall u\in S_t} N(u)|
\end{equation}

where $N(v)=\{v'\in V \mid (v,v')\in E\}$ are the neighbors of $v$. Note that  $N(v)$ may contain noisy nodes since they contribute to the locality of $v\in V^g$. However, locality (and $q$-learning in  general) is computed only on good nodes. 
The initial representation $\boldsymbol{\mu_v}$ of each node $v\in C_t$ is therefore the $2$-dimensional vector $[score'(v),loc(v,S_t)]$. The representation of the set of nodes $C_t$ is defined as $\boldsymbol{\mu_{C_t}}=\textsc{MaxPool}\left\{\mu_v \mid\: v\in C_t\right\}$. $\boldsymbol{\mu_{S_t}}$ is defined analogously as well. We use \textsc{MaxPool} since it captures the best available candidate node better than alternatives such as \textsc{MeanPool}. Empirically, we obtain better results as well.



    
$\bullet$ \textbf{Action and Reward: }An action corresponds to adding a node $v\in C_t$ to the solution set $S_t$. The immediate ($0$-step) reward of the action is its marginal gain, i.e. $r(S_t,v)=f(S_t\cup\{v\})-f(S_t)$.

$\bullet$ \textbf{Policy and Termination: }The policy $\pi(v\mid S_t)$ selects the 
node with the highest \emph{predicted} $n$-step reward, i.e., $\arg\max_{v \in C_t} Q'_n(S_t,v;\Theta_Q)$. 
We terminate after training the model for $T$  samples.


\textbf{Learning the parameter set $\Theta_Q$: } 
We partition $\Theta_Q$ into three weight matrices $\Theta_1$, $\Theta_2$, $\Theta_3$, and one weight vector $\Theta_4$ such that, $Q'_n(S_t,v;\Theta_Q) = \Theta_4\cdot \boldsymbol{\mu_{C_t,S_t,v}}$, where $\boldsymbol{\mu_{C_t,S_t,v}} =\textsc{Concat} \left (\Theta_1 \cdot\boldsymbol{\mu}_{C_t} , \Theta_2 \cdot\boldsymbol{\mu}_{S_t},\Theta_3 \cdot\boldsymbol{\mu}_{v}\right ) $. If we want to encode the state space in a $d$-dimensional layer, the dimensions of the weight vectors are as follows: $\Theta_4 \in \mathbb{R}^{1\times 3d}; \Theta_1,\Theta_2,\Theta_3 \in \mathbb{R}^{d\times 2}$. $Q$-learning updates parameters in a single episode via Adam optimizer\cite{adam} to minimize the squared loss.
\vspace{-0.15in}
\begin{equation}
\nonumber
J(\Theta_Q)=(y - Q'_n(S_t, v_t; \Theta_Q))^2
\text{, where } y=\gamma \cdot  \max_{v\in V^g}\{Q'_n(S_{t+n},v;\Theta_Q)\}+ \sum_{i=0}^{n-1}r(S_{t+i},v_{t+i})
\end{equation}
\vspace{-0.15in}
 
$\gamma$ is the \emph{discount factor} and balances the importance of immediate reward with the predicted $n$-step future reward~\cite{sutton2018reinforcement}. 
 The pseudocode with more details is provided in the Supplementary (App. C).
 
 \vspace{-0.10in}
 \subsubsection{Importance Sampling for Fast Locality Computation}
\label{sec:locality}
\vspace{-0.10in}

Since degrees of nodes in real graphs may be very high, computing locality (Eq.~\ref{eq:loc}) is expensive. Furthermore, locality is re-computed in each iteration. We negate this computational bottleneck through \emph{importance sampling}. 
Let $N(V^g)=\{\left( v,u \right)\in E\mid v\in V^g\}$ be the neighbors of all nodes in $V^g$. Given a sample size $z$, we extract a subset $N_z(V^g)\subseteq N(V^g)$ of size $z$ and compute locality only based on the nodes in $N_z(V^g)$. Importance sampling samples elements proportional to their importance. The \emph{importance} of a node in $N(V^g)$ is defined as $I(v)=\frac{score'(v)}{\sum_{\forall v'\in N(V^g)}score'(v')}$.

\textbf{Determining sample size: }Let $\mu_{N(V^g)}$ be the mean importance of all nodes in $N(V^g)$ and $\hat{\mu}_{N_z(V^g)}$ the mean importance of sampled nodes. The sampling is \emph{accurate} if $\mu_{N(V^g)}\approx \hat{\mu}_{N_z(V^g)}$. 
\vspace{-0.10in}
\begin{thm}
\label{thm:samplesize}
 Given an error bound $\epsilon$, if sample size $z$ is $O\left(\frac{\log |N(V^g)|}{\epsilon^2}\right)$, then $P\left[|\hat{\mu}_{N_z(V^g)} - \mu_{N(V^g)}|< \epsilon\right] >1-\frac{1}{|N(V^g)|^2}$.
\end{thm}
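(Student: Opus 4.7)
The plan is to invoke Hoeffding's concentration inequality for bounded random variables. Writing $N = |N(V^g)|$ for brevity, the first step is to observe that every importance value $I(v)$ lies in $[0,1]$, since they are nonnegative and normalized to sum to one. The sampled mean $\hat{\mu}_{N_z(V^g)}$ is therefore an average of $z$ independent draws $X_1, \ldots, X_z$, each supported on the bounded set $\{I(v) : v \in N(V^g)\} \subseteq [0,1]$ with common expectation $\mu_{N(V^g)}$. Hoeffding's inequality then gives
$$ P\bigl[|\hat{\mu}_{N_z(V^g)} - \mu_{N(V^g)}| \geq \epsilon\bigr] \;\leq\; 2\exp(-2z\epsilon^2). $$

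The second step is a routine inversion to translate this tail bound into a sample complexity. Requiring the right-hand side to be at most $1/N^2$ and taking logarithms yields $z \geq (\ln 2 + 2 \ln N)/(2\epsilon^2) = O(\log N/\epsilon^2)$. Plugging this value back into the Hoeffding bound and taking complements directly recovers the claim $P[|\hat{\mu}_{N_z(V^g)} - \mu_{N(V^g)}| < \epsilon] > 1 - 1/N^2$. No further work is needed once this setup is in place.

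The main obstacle is conceptual rather than computational: one has to pin down the sampling model so that $X_1,\ldots,X_z$ are genuinely i.i.d.\ with common expectation $\mu_{N(V^g)}$. The cleanest way is to formalize importance sampling as drawing with replacement according to the weight distribution and to use an unbiased estimator consistent with that distribution, ensuring each $X_i$ remains bounded in $[0,1]$ with the desired mean. If sampling without replacement is insisted upon, a Hoeffding--Serfling style inequality gives the same $O(\log N / \epsilon^2)$ rate, so the asymptotic sample-size bound is unaffected. Once the sampling model is settled, the theorem follows from a one-line invocation of the concentration bound.
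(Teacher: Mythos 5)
Your proposal follows essentially the same route as the paper: establish that the importance-sampling estimator is unbiased for $\mu_{N(V^g)}$ (the paper does this in a separate lemma with the weighted estimator $\hat{w}_v = 1/I(v)$), then apply Hoeffding's inequality to the $[0,1]$-bounded i.i.d.\ samples and set $z = \frac{\log(2|N(V^g)|^2)}{2\epsilon^2}$ to obtain the stated tail bound. The argument is correct and matches the paper's proof in both structure and the key concentration step.
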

\vspace{-0.10in}

\textbf{Remarks: } {\bf (1)} The sample size grows \emph{logarithmically} with the neighborhood size, i.e., $|N(V^g)|$ and thus scalable to large graphs. {\bf (2)} $z$ is an inversely proportional function of the error bound $\epsilon$. 




\vspace{-0.10in}
\subsection{Test Phase}
\label{Time complexity}
\vspace{-0.10in}
Given an unseen graph $G$ and budget $b$, we {\bf (1)} identify the noisy nodes, {\bf (2)} embed good nodes through a single forward pass through GCN, and {\bf (3)} use GCN output to embed them and perform Q-learning to compute the final solution set.

\textbf{Complexity analysis:} The time complexity of the test phase in \gc is $O\left(|V|+|V^{g,K}|\left(dm_G+m_G^2\right)+|V^g|b\left(d+m_Q\right)\right)$, where $d$ is the average degree of a node, $m_G$ and $m_Q$ are the dimensions of the embeddings in GCN and $Q$-learning respectively, $K$ is the number of layers in GCN, and $V^{g,K}$ represents the set of nodes within the $K$-hop neighborhood of $V^g$. The space complexity is $O(|V|+|E| + Km_G^2 + m_Q)$. The derivations are provided in App.~D.
\vspace{-0.1in}
\section{Empirical Evaluation}
\label{sec:experiments}
\vspace{-0.1in}

In this section, we benchmark \gc against \gt and \sv, and establish that \gc produces marginally improved quality, while being orders of magnitudes faster. The source code can be found at \url{https://github.com/idea-iitd/GCOMB} .
\vspace{-0.1in}

\begin{table}[t]
\centering
\vspace{-0.10in}
\subfloat[Datasets from SNAP repository~\cite{snap}.]{
\scalebox{0.75}{
\label{table:data_description}
\begin{tabular}{|c | c | c | c | c | c |}
\hline
\textbf{Name}& \textbf{$|V|$} & \textbf{$|E|$}\\
\hline
Brightkite (BK)  & 58.2K & 214K\\
\hline
Twitter-ego (TW-ew)  & 81.3K & 1.7M\\
\hline
Gowalla (GO)&  196.5K & 950.3K\\
\hline
YouTube (YT) & 1.13M & 2.99M\\
\hline
StackOverflow (Stack) & 2.69M  & 5.9M\\
\hline
Orkut & 3.07M & 117.1M\\
\hline
Twitter (TW) & 41.6M & 1.5B\\
\hline
FriendSter (FS) & 65.6M & 1.8B\\
\hline
\end{tabular}}}
\subfloat[Coverage in MCP]{
\scalebox{0.70}{
\label{tab:optimal}
\begin{tabular}{| c||c |c | c || c |c|c|}
\hline
\textbf{Dataset}& \multicolumn{3}{c}{\bf BP-500}& \multicolumn{3}{|c|}{\bf Gowalla-900}\\
\hline
\textbf{Budget}& \textbf{\gc} & \textbf{Greedy} & \textbf{Optimal} & \textbf{\gc} & \textbf{Greedy} & \textbf{Optimal}\\
\hline
\textbf{2}& $\textbf{0.295}$ & $0.295$ & $0.295$ & $\textbf{0.75}$ & $0.75$ & $0.75$\\
\hline
\textbf{4}& $0.495$ & $0.505$ & $0.51$&  $0.902$ & $0.904$ & $0.904$\\
\hline
\textbf{6}& $0.765$ & $0.77$ & $0.773$&$\textbf{0.941}$ & $0.93$ & $0.941$\\
\hline
\textbf{10}& $0.843$ & $0.845$ & $0.845$& $\textbf{0.952}$ & $0.952$ & $0.952$\\
\hline
\textbf{15}& $\textbf{0.96}$ & $0.953$ & $0.963$& $\textbf{0.963}$ & $0.963$ & $0.963$\\
\hline
\textbf{20}& $\textbf{0.998}$ & $0.99$ & $1$& $\textbf{0.974}$ & $0.974$& $0.974$\\
\textbf{25}& $\textbf{1}$ & $1$ & $1$ & $\textbf{0.985}$ & $0.985$ & $0.985$\\
\textbf{30}& $-$ & $-$ & $-$& $\textbf{0.996}$ & $0.996$ & $0.996$\\
\textbf{35}& $-$ & $-$ & $-$& $\textbf{1}$ & $1$ & $1$ \\
\hline
\end{tabular}}
}
\caption{In (b), the specific cases where \gc matches or outperforms Greedy are highlighted in bold. Gowalla-900 is a small subgraph of 900 nodes extracted from Gowalla (See App. I for details). }
 \end{table}
 
\subsection{Experimental Setup}
\label{sec:setup}

All experiments are performed on a machine running Intel Xeon E5-2698v4 processor with $64$ cores, having $1$ Nvidia $1080$ Ti GPU card with 12GB GPU memory, and $256$ GB RAM with Ubuntu $16.04$. 
 All experiments are repeated $5$ times and we report the average of the metric being measured.

\textbf{Datasets: }
 Table~\ref{table:data_description}) lists the real datasets used for our experiments.\\  
 \textit{Random Bipartite Graphs (BP):} We also use the synthetic random bipartite graphs from \sv \cite{dai2017learning}. In this model, given the number of nodes, they are partitioned into two sets with $20\%$ nodes in one side and the rest in other. The edge between any pair of nodes from different partitions is generated with probability $0.1$. We use BP-$X$ to denote a generated bipartite graph of $X$ nodes.



\textbf{Problem Instances:} The performance of \gc is benchmarked on Influence Maximization (IM), Maximum Vertex Cover (MVC), and Maximum Coverage Problem (MCP) (\S~\ref{sec:formulation}). 
Since MVC can be mapped to MCP, empirical results on MVC are included in App. M. 

\textbf{Baselines:} The performance of \gc is primarily compared with {\bf (1)} \emph{\gt} \cite{li2018combinatorial}, which is the state-of-the-art technique to learn combinatorial algorithms. In addition, for MCP, we also compare the performance with {\bf (2)} \emph{Greedy} (Alg.1 in App. B), {\bf (3)} \sv \cite{dai2017learning}, { \bf(5)} \emph{CELF} ~\cite{leskovec2007cost} and {\bf (6)} the \emph{Optimal} solution set  (obtained using CPLEX~\cite{cplex} on small datasets). Greedy and \textsc{Celf} guarantees a $1-1/e$ approximation for all three problems. We also compare with {\bf(6)} \emph{Stochastic Greedy(SG)}  ~\cite{mirzasoleiman2014lazier} in App. L. For the problem of IM, we also compare with the state-of-the-art algorithm {\bf (7)} \emph{IMM}~\cite{imm}. Additionally, we also compare \gc with { \bf(8)} \emph{OPIM}~\cite{tang2018online}. 
 For \sv, \gt, IMM, and OPIM we use the code shared by the authors. 

\textbf{Training: }In all our experiments, for a fair comparison of \gc with \sv and \gt, we train all models for $12$ hours and the best performing model on the validation set is used for inference. Nonetheless, we precisely measure the impact of training time in Fig.~\ref{fig:traintime}. The break-up  of time spent in each of the three training phases is shown in App. G in the Supplementary. 

\textbf{Parameters: }The parameters used for \gc are outlined in App. H and their impact on performance is analyzed in App. N. For \sv and \gt, the best performing parameter values are identified using grid-search. In IMM, we set $\epsilon=0.5$ as suggested by the authors. In OPIM, $\epsilon$ is recommended to be kept in range $[0.01,0.1]$. Thus, we set it to $\epsilon=0.05$. 

\vspace{-0.10in}
\subsection{Performance on Max Cover (MCP)}
\vspace{-0.10in}

We evaluate the methods on both synthetic random bipartite (BP) graphs as well as real networks. 
\textbf{Train-Validation-Test split:} While testing on any synthetic BP graph, we train and validate on five BP-1k graphs each. For real graphs, we train and validate on BrightKite (BK) ($50:50$ split for train and validate) and test on other real networks.  Since our real graphs are not bipartite, we convert it to one by making two copies of $V$: $V_1$ and $V_2$. We add an edge from $u\in V_1$ to $u'\in V_2$ if $(u,u')\in E$.


\textbf{Comparison with Greedy and Optimal: } Table~\ref{tab:optimal} presents the achieved coverage (Recall \S~\ref{sec:formulation} for definition of coverage). We note that Greedy provides an empirical approximation ratio of at least $99\%$ when compared to the optimal. This indicates that in larger datasets where we are unable to compute the optimal, Greedy can be assumed to be sufficiently close to the optimal. Second, \gc is sometimes able to perform even better than greedy. This indicates that $Q$-learning is able to learn a more generalized policy through \emph{delayed} rewards and avoid a myopic view of the solution space.

\textbf{Synthetic Datasets:} 
Table \ref{table:msc_gcomb_soa} presents the results. \gc and Greedy achieves the highest coverage consistently. While \sv performs marginally better than \gt, \sv is the least scalable among all techniques; it runs out of memory on graphs containing more than $20,000$ nodes. As discussed in details in \S~\ref{sec:contributions}, the non-scalability of \sv stems from relying on an architecture with significantly larger parameter set than \gc or \gt. 
In contrast, \gc avoids noisy nodes, and focuses the search operation only on the good nodes.
\begin{table}[t]
\subfloat[Coverage achieved in MCP at $b=15$.]{
\scalebox{0.7}{
\label{table:msc_gcomb_soa}
\begin{tabular}{|c|p{0.7in} |p{0.7in} | c | c |}
\hline
\textbf{Graph}& \textbf{\sv} & \textbf{GCN-TS} & \textbf{\gc} & \textbf{Greedy}\\
\hline
\textbf{BP-2k}& $0.87$ & $0.86$ & $\mathbf{0.89}$ & $0.89$\\
\hline
\textbf{BP-5k}& $0.85$ & $0.84$ & $\mathbf{0.86}$ & $0.86$\\
\hline
\textbf{BP-10k}& $0.84$ & $0.83$ & $\mathbf{0.85}$ & $0.85$\\
\hline
\textbf{BP-20k}& NA & $0.82$ & $\mathbf{0.83}$ & $0.83$\\
\hline
\end{tabular}}}\hspace{0.5in}
\subfloat[Speed-up against \textsc{CELF} in MCP on YT.]{
\scalebox{0.7}{
\label{tab:mcp_gcomb_celf}
\begin{tabular}{|c|p{0.9in}|}
\hline
\textbf{Budget}& \textbf{Speed-up} \\
\hline
\textbf{20}& $4$\\
\hline
\textbf{50}& $3.9$\\
\hline
\textbf{100}& $3.01$\\
\hline
\textbf{150}& $2.11$ \\
\hline
\textbf{200}& $2.01$ \\
\hline
\end{tabular}}}
\caption{ (a) Coverage on Random Graphs in MCP. (b) Speed-up achieved by \gc against \textsc{Celf} on YT in MCP.}
\vspace{-0.10in}
 \end{table}
 
 \textit{Impact of training time: } A complex model with more number of parameters results in slower learning. In Fig.~\ref{fig:traintime}, we measure the coverage against the training time. While \gc's performance saturates within $10$ minutes, \sv and \gt need $9$ and $5$ hours respectively for training to obtain its best performance.

 \textbf{Real Datasets: } 
 Figs.~\ref{fig:twt-e_quality} and \ref{fig:yt_quality} present the achieved Coverage as the budget is varied. \gc achieves similar quality as Greedy, while \gt is marginally inferior. The real impact of \gc is highlighted in Figs.~\ref{fig:twt-e_time} and \ref{fig:yt_time}, which shows that \gc is up to $2$ orders of magnitude faster than \gt and $10$ times faster than Greedy. Similar conclusion can also be drawn from the results on Gowalla dataset in App. K in Supplementary.

 \textbf{Comparison with \textsc{Celf}: } Table ~\ref{tab:mcp_gcomb_celf} presents the speed-up achieved by \gc against \textsc{Celf}. The first pass of \textsc{Celf} involves sorting the nodes, which has complexity $O(|V| log   |V|)$. On the other hand, no such sorting is required in \gc. Thus, the speed-up achieved is higher in smaller budgets.

\begin{figure*}[b]
	\centering
\vspace{-0.1in}
\subfloat[Bipartite graph (5k)]{
	\label{fig:traintime}
	\includegraphics[width=1.08in]{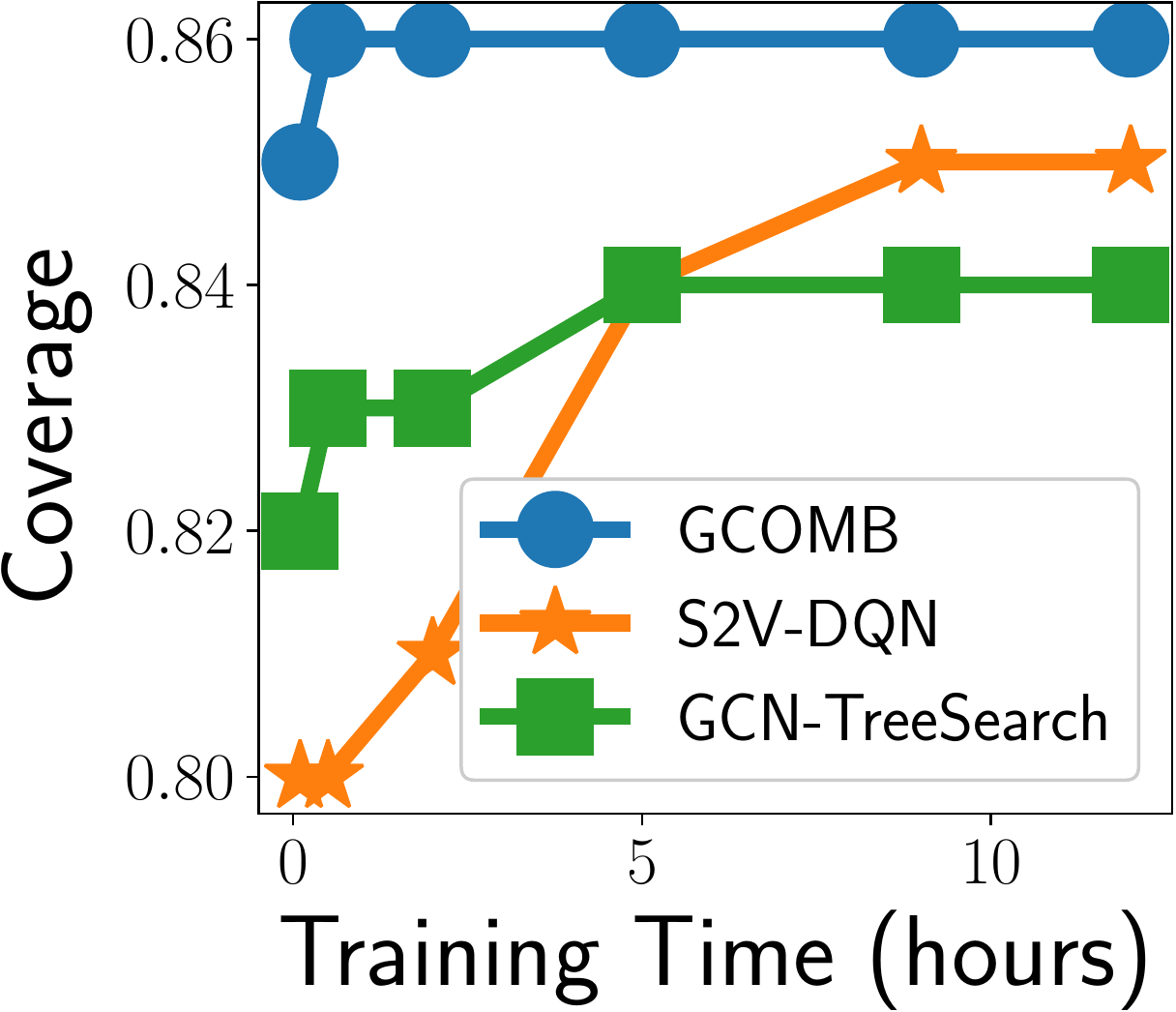}}
\subfloat[Twitter-ego]{
	\label{fig:twt-e_quality}
	\includegraphics[width=1.06in]{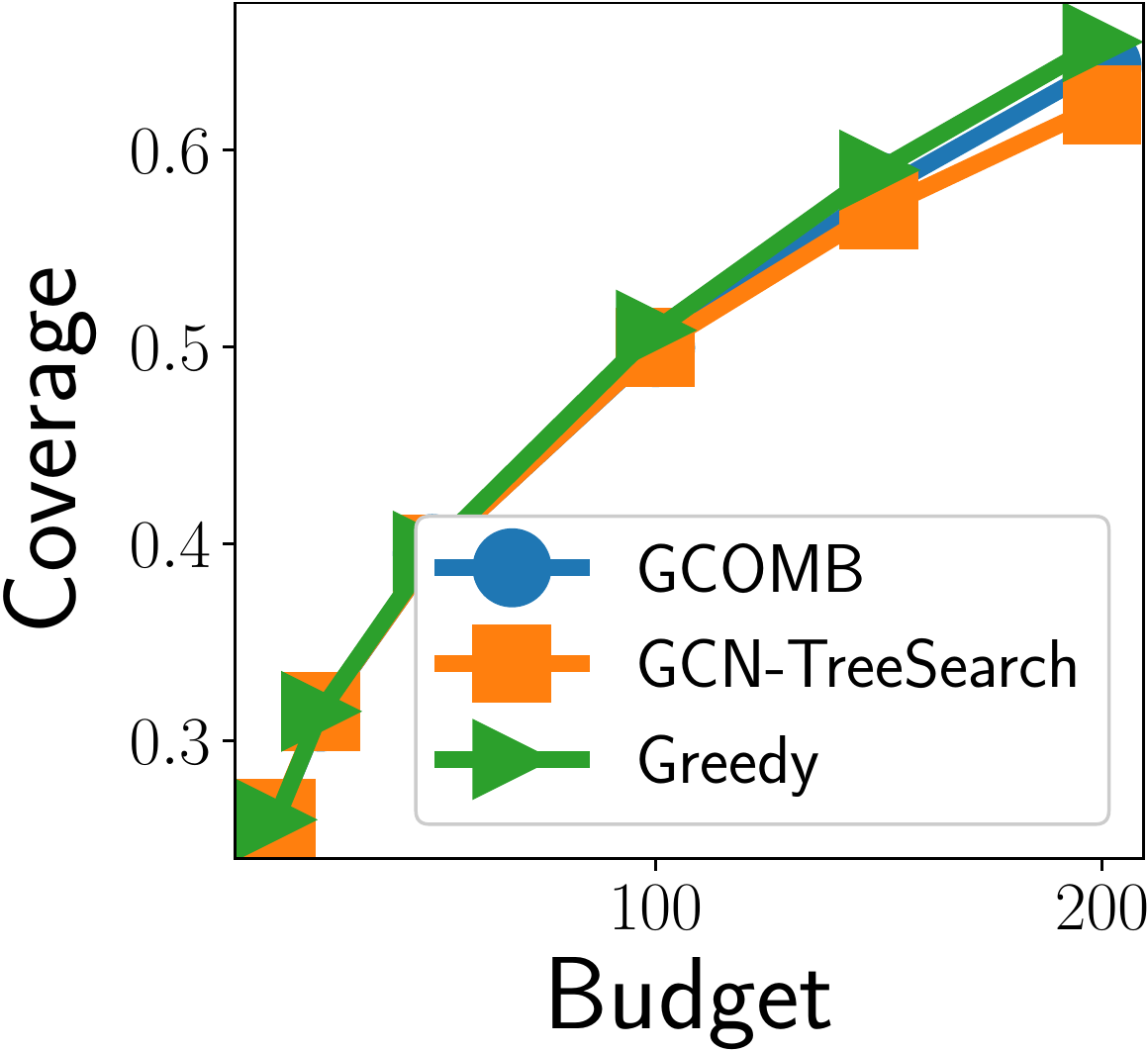}}
\subfloat[YouTube]{
	\label{fig:yt_quality}
	\includegraphics[width=1.10in]{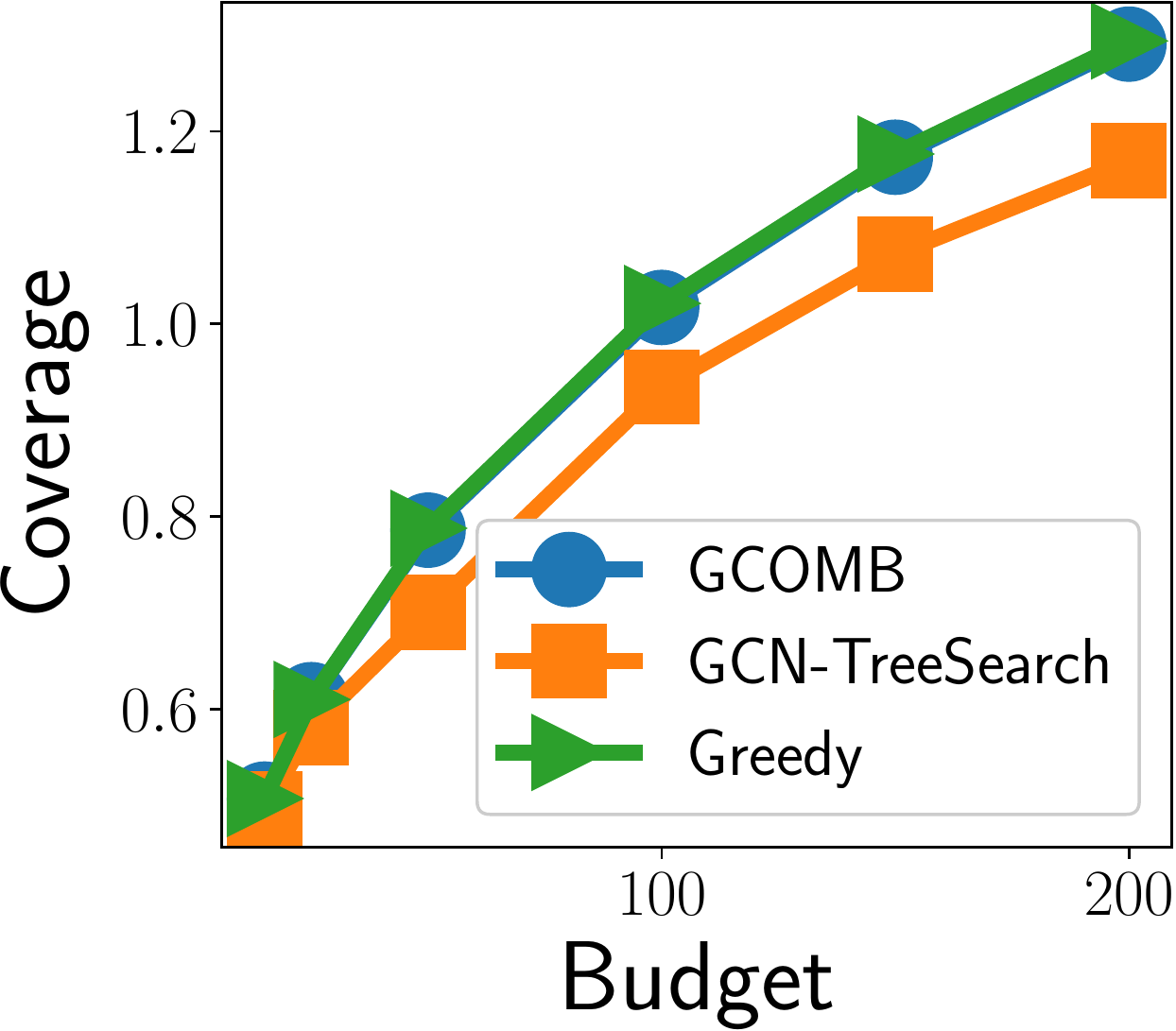}}
\subfloat[Twitter-ego]{
	\label{fig:twt-e_time}
	\includegraphics[width=1.05in,height=1.01in]{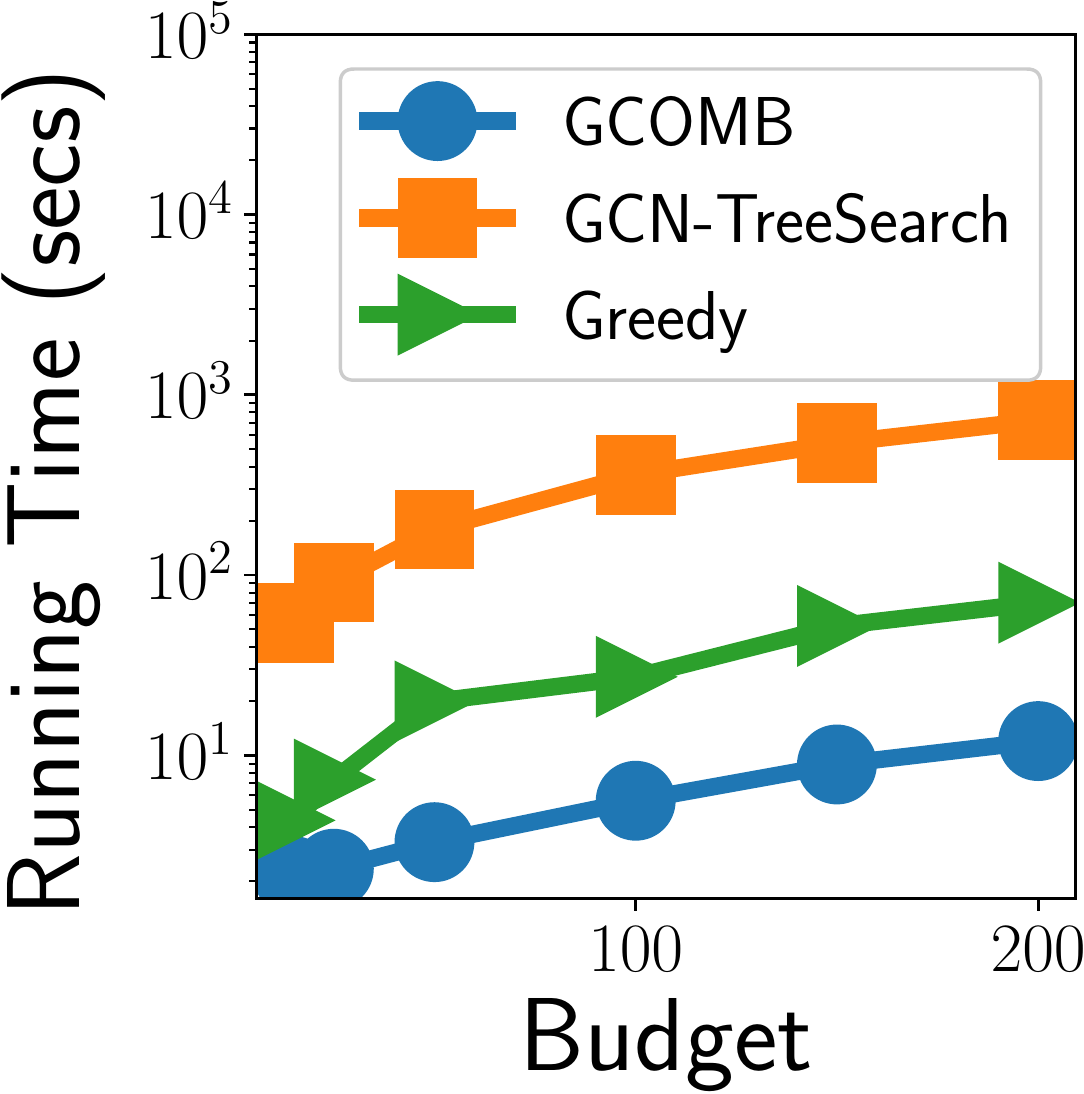}}
\subfloat[YouTube]{
	\label{fig:yt_time}
	\includegraphics[width=1.04in]{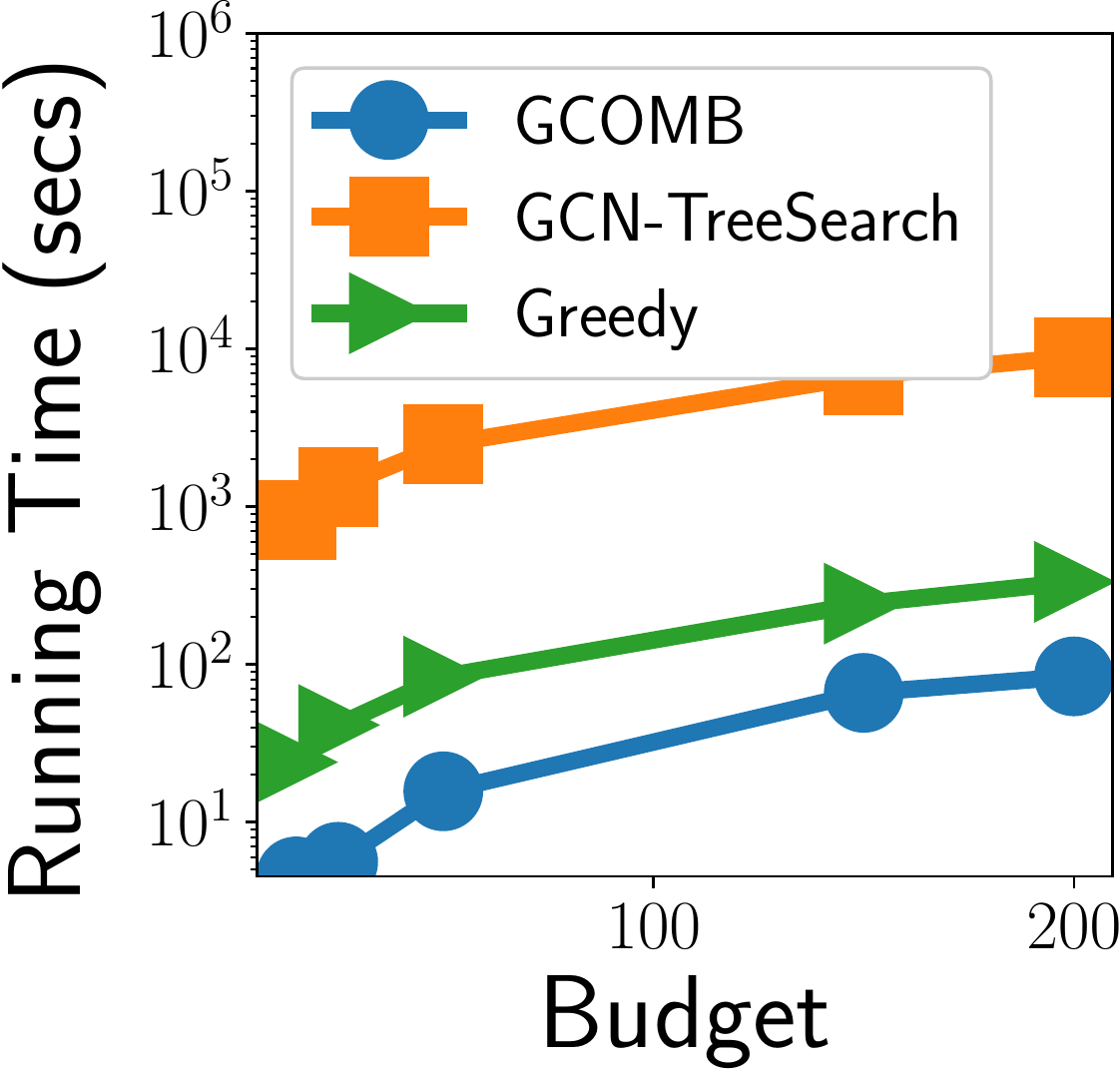}}
	\caption{MCP: (a) Improvement in Coverage against training time at $b=15$. (b-c) Coverage achieved by \gc, \gt and Greedy. (d-e) Running times in TW-ew and YT.}
\end{figure*}

\subsection{Performance on Influence Maximization}
\label{sec:im}

Influence Maximization (IM) is the hardest of the three combinatorial problems since estimating the spread of a node is \#P-hard \cite{kempe2003maximizing}. 

\textbf{Edge weights: } 
We assign edge weights that denote the influence of a connection using the two popular models \cite{benchmarking}: \textbf{(1) Constant (CO:)} All edge weights are set to $0.1$, \textbf{(2) Tri-valency (TV): } Edge weights are sampled randomly from the set $\{0.1, 0.01, 0.001\}$. In addition, we also employ a third \textbf{(3) Learned (LND)} model, where we \emph{learn} the influence probabilities from the action logs of users. This is only applicable to the Stack data which contain \emph{action logs} from $8/2008$ to $3/2016$. 
We define the influence of $u$ on $v$ as the probability of $v$ interacting with $u$'s content at least once in a month. 


\textbf{Train-Validation-Test split: } In all of the subsequent experiments, for CO and TV edge weight models, we train and validate on a subgraph sampled out of YT by randomly selecting $30\%$ of the edges ($50\%$ of this subset is used for training and $50\%$ is used for validation). For LND edge weight models, we train and validate on the subgraph induced by the $30\%$ of the earliest edges from Stack in terms of temporal order. While testing, on YT and Stack, we use the graph formed by the remaining $70\%$ of the edges that are not used for training. On other datasets, we use the entire graph for testing since neither those datasets nor their subsets are used for training purposes.  

\textbf{\gc vs.\gt: }Fig.~\ref{fig:time_nips} compares the running time in IM on progressively larger subgraphs extracted from YT. While \gt consumes $\approx 3$ hours on the $70\%$ sub-graph, \gc finishes in $5$ seconds. 

\textbf{\gc vs. \textsc{NoisePruner+Celf}}
\textsc{NoisePruner+Celf}, i.e., running \textsc{Celf} only on non-noisy nodes, is orders of magnitude slower than \gc in IM (See Fig~\ref{fig:celf_noise}). Pruning noisy nodes does not reduce the graph size; it only reduces the number of candidate nodes. To compute expected spread in IM, we still require the entire graph, resulting in non-scalability. 

 \textbf{Billion-sized graphs:}  IMM crashes on both the billion-sized datasets of TW and FS, as well as Orkut. Unsurprisingly, similar results have been reported in \cite{benchmarking}. IMM strategically samples a subgraph of the entire graph based on the edge weights. On this sampled subgraph, it estimates the influence of a node using \emph{reverse reachability sets}. 
On large graphs, the sample size exceeds the RAM capacity of 256GB. Hence, it crashes.
 In contrast, \gc finishes within minutes for smaller budgets ($b<30$) and within $100$ minutes on larger budgets of $100$ and $200$ (Figs.~\ref{fig:time_ic}-\ref{fig:time_tv} ). This massive scalability of \gc is a result of low storage overhead (only the graph and GCN and Q-learning parameters; detailed Space complexity provided in App. D in the Supplementary) and relying on just forwarded passes through GCN and $Q$-learning. The speed-up with respect to OPIM on billion-sized graphs can be seen in App. J. 

\textbf{Performance on YT and Stack:} Since IMM crashes on Orkut, TW, and FS, we compare the quality of \gc with IMM on YT and Stack. Table~\ref{tab:sd_imm} reports the results in terms of \emph{spread difference}, where $\text{Spread Difference}=\frac{f(S_{IMM})-f(S_{\gc})}{f(S_{IMM})}\times 100$.   $S_{IMM}$ and $S_{\gc}$ are answer sets computed by IMM and \gc respectively. A negative spread difference indicates better performance by \gc. The expected spread of a given set of nodes $S$, i.e. $f(S)$, is computed by taking the average spread across $10,000$ Monte Carlo simulations. 

 Table~\ref{tab:sd_imm} shows that the expected spread obtained by both techniques are extremely close. 
 The true impact of \gc is realized when Table~\ref{tab:sd_imm} is considered in conjunction with Figs.~\ref{fig:speedup_yt}-\ref{fig:speedup_st}, which shows \gc is $30$ to $160$ times faster than IMM. In this plot, speed-up is measured as $\frac{time_{IMM}}{time_{\gc}}$ where $time_{IMM}$ and $time_{\gc}$ are the running times of IMM and \gc respectively. 
 
 Similar behavior is observed when compared against OPIM as seen in Table ~\ref{tab:sd_opim} and Figs.~\ref{fig:speedup_yt_opim}-~\ref{fig:speedup_st_opim}.



\begin{table}[h]
\centering
\subfloat[Spread difference between IMM and \gc.]{
\scalebox{0.6}{
\label{tab:sd_imm}
\begin{tabular}{| c || c |c || c | c |c | }
\hline
\textbf{$b$}&  \textbf{YT-TV} & \textbf{YT-CO} &  \textbf{Stack-TV} & \textbf{Stack-CO} & {\bf Stack-LND}\\
\hline
\textbf{10}&  $\mathbf{-1\times 10^{-3}}$ & $1\times 10^{-4}$& $2\times 10^{-5}$& $\approx 0$& $1\times 10^{-5}$\\
\hline
\textbf{20}&  $\mathbf{-2\times 10^{-3}}$ & $2\times 10^{-4}$& $3\times 10^{-5}$& $3\times 10^{-5}$& $\mathbf{-7\times 10^{-5}}$\\
\hline
\textbf{50}&  $\mathbf{-3\times 10^{-3}}$ & $\mathbf{-5\times 10^{-5}}$ & $2\times 10^{-5}$ & $6\times 10^{-5}$& $\mathbf{-7\times 10^{-5}}$\\
\hline
\textbf{100}&  $\mathbf{-1\times 10^{-3}}$ & $6\times 10^{-4}$& $2\times 10^{-4}$& $2\times 10^{-4}$& $\mathbf{-1\times 10^{-4}}$\\
\hline
\textbf{150}& $\mathbf{-6\times 10^{-4}}$ & $3\times 10^{-4}$& $1\times 10^{-4}$& $1\times 10^{-4}$& $\mathbf{-3\times 10^{-5}}$\\
\hline
\textbf{200} & $\mathbf{-2\times 10^{-3}}$ & $2\times 10^{-5}$& $2\times 10^{-4}$& $2\times 10^{-4}$& $\mathbf{-1\times 10^{-4}}$\\
\hline
\end{tabular}}}
\subfloat[Spread difference between OPIM and \gc.]{
\scalebox{0.6}{
\label{tab:sd_opim}
\begin{tabular}{| c || c |c || c | c |c | }
\hline
\textbf{$b$}&  \textbf{YT-TV} & \textbf{YT-CO} &  \textbf{Stack-TV} & \textbf{Stack-CO} & {\bf Stack-LND}\\
\hline
\textbf{10}&  $\mathbf{-5\times 10^{-5}}$ & $\mathbf{-1\times 10^{-5}}$& $2\times 10^{-5}$& $\approx 0$& $1\times 10^{-5}$\\
\hline
\textbf{20}&  $\mathbf{-1\times 10^{-4}}$ & $1\times 10^{-5}$& $3\times 10^{-5}$& $2\times 10^{-5}$& $\mathbf{-2\times 10^{-5}}$\\
\hline
\textbf{50}&  $\mathbf{-2\times 10^{-4}}$ & $\mathbf{-3\times 10^{-5}}$ & $2\times 10^{-5}$ & $5\times 10^{-5}$& $\mathbf{-6\times 10^{-4}}$\\
\hline
\textbf{100}&  $\mathbf{-3\times 10^{-4}}$ & $2\times 10^{-5}$& $1\times 10^{-4}$& $7\times 10^{-5}$& $\mathbf{-2\times 10^{-4}}$\\
\hline
\textbf{150}& $\mathbf{-3\times 10^{-4}}$ & $\mathbf{-2\times 10^{-5}}$& $1\times 10^{-4}$& $1\times 10^{-4}$& $\mathbf{-3\times 10^{-4}}$\\
\hline
\textbf{200} & $\mathbf{-4\times 10^{-4}}$ & $\mathbf{-7\times 10^{-5}}$& $2\times 10^{-4}$& $2\times 10^{-4}$& $\mathbf{-3\times 10^{-4}}$\\
\hline
\end{tabular}}}
 \caption{Comparison with respect to (a) \textsc{IMM} and (b) \textsc{OPIM} on YT and Stack. A \emph{negative} value, highlighted in bold, indicates \emph{better performance by \gc.}}
 \end{table}

\begin{figure*}[t]
	\centering
\vspace{-0.20in}
\subfloat[Scalability]{
\label{fig:time_nips}
\includegraphics[width=1.2in]{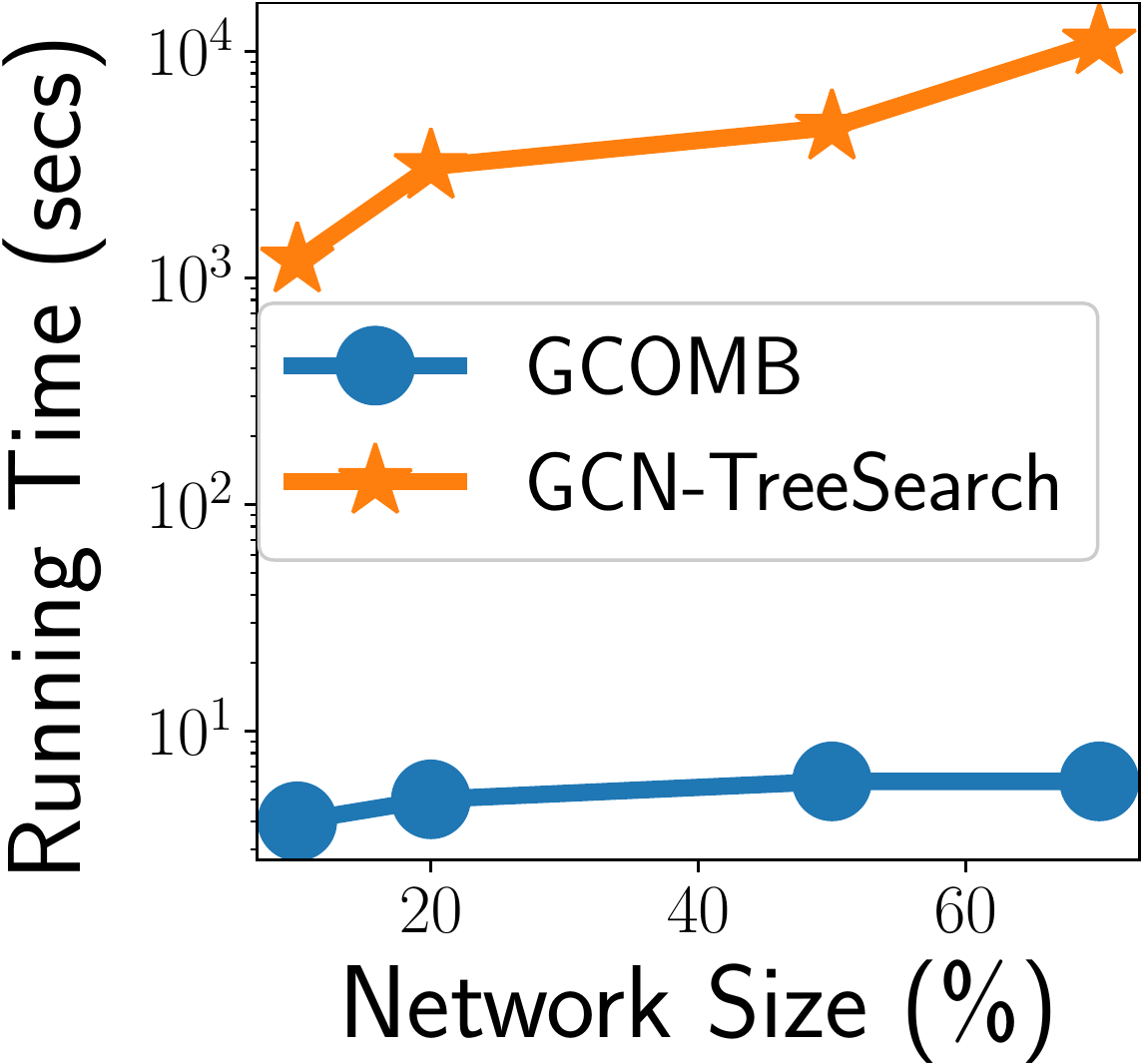}}\hspace{0.1in}
\subfloat[Speed-up IMM]{
	\label{fig:speedup_yt}
	\includegraphics[width=1.15in]{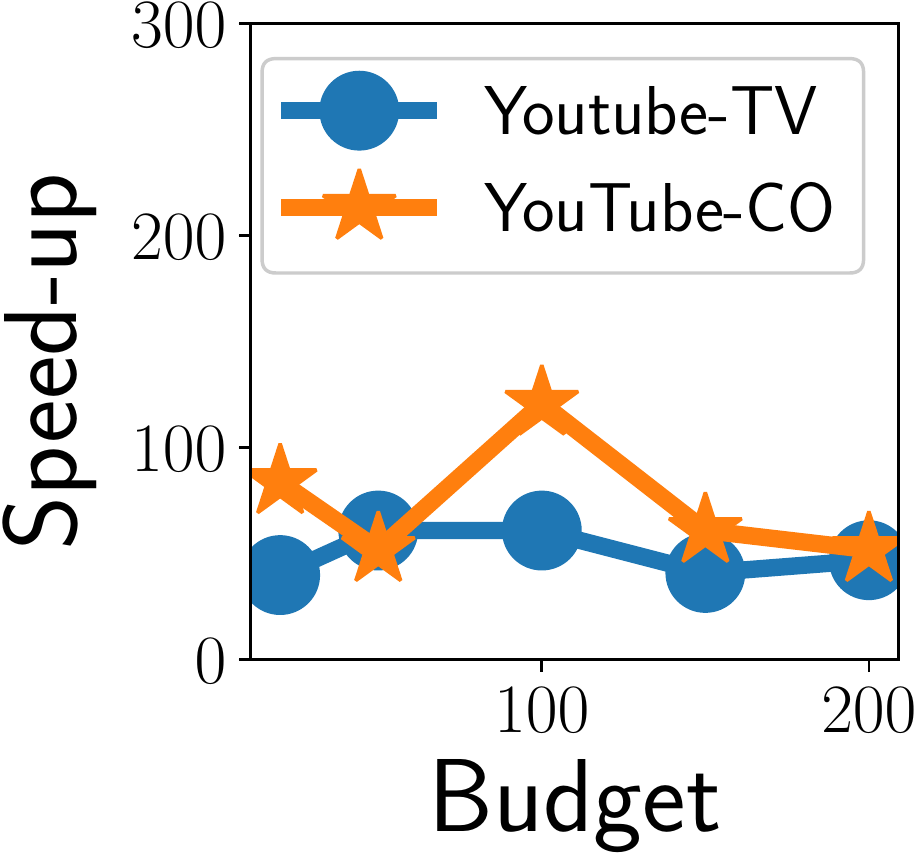}}\hspace{0.1in}
\subfloat[Speed-up IMM]{
	\label{fig:speedup_st}
	\includegraphics[width=1.12in,]{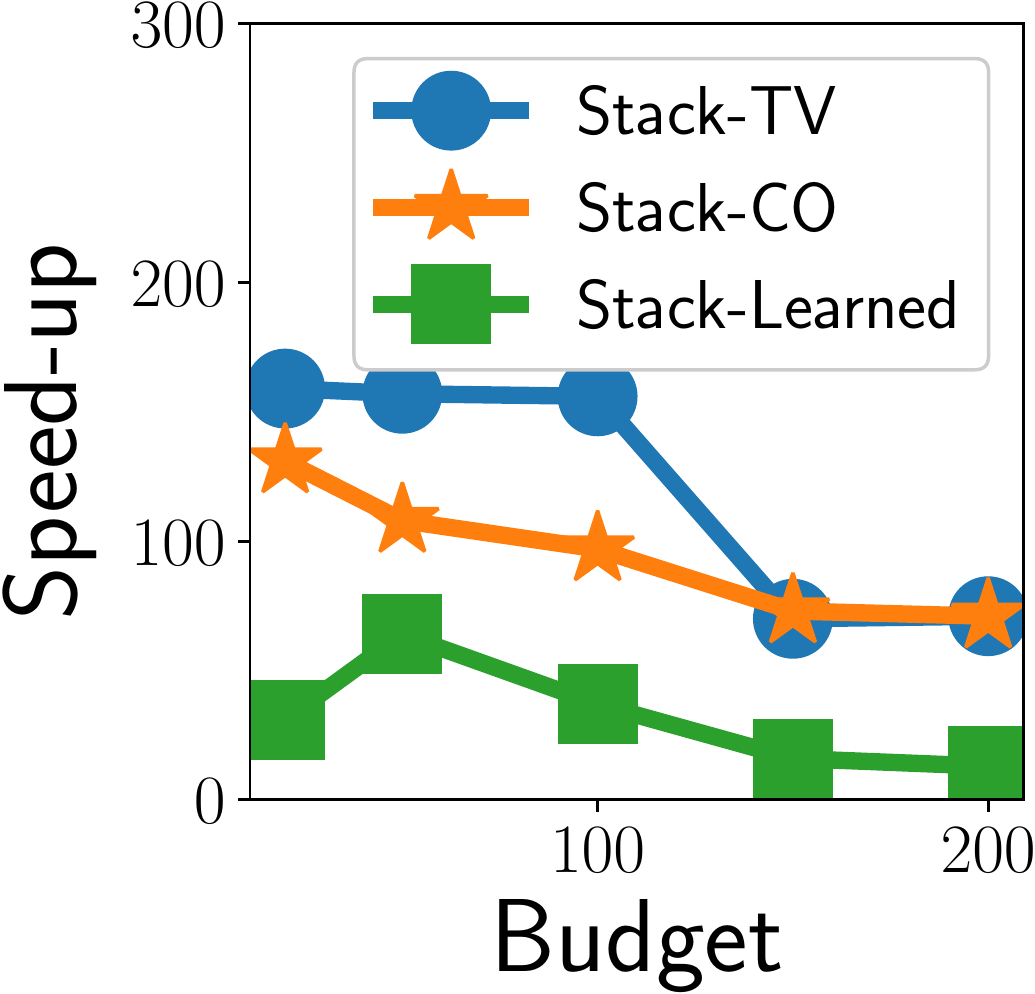}}\hspace{0.13in}
\subfloat[NoisePruner+CELF]{
	\label{fig:celf_noise}
	\includegraphics[width=1.17in,height=1.05in]{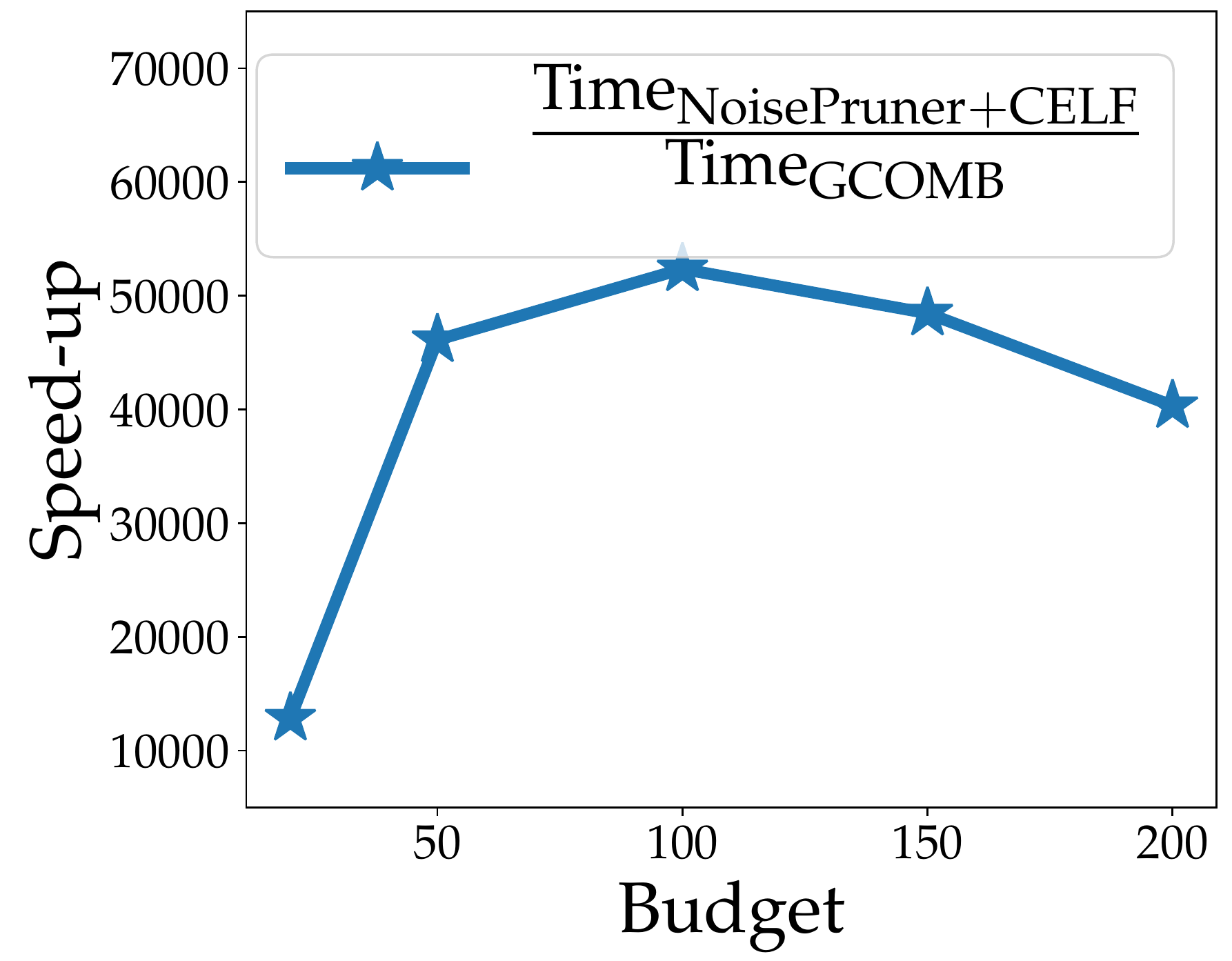}}\\
\hspace{0.16in}\subfloat[Speed-up OPIM]{
	\label{fig:speedup_yt_opim}
	\includegraphics[width=1.22in,height=1.05in]{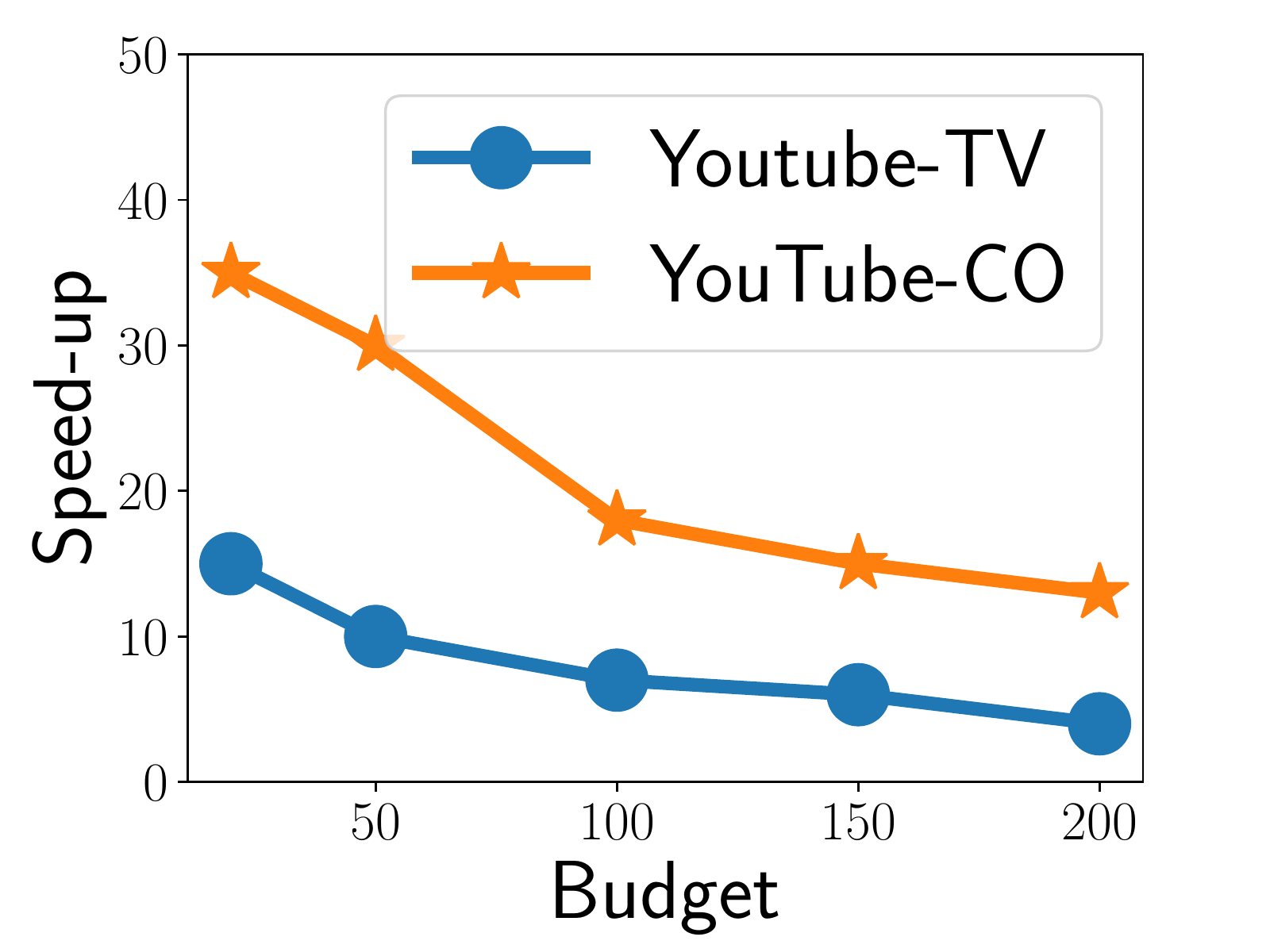}}\hspace{0.05in}
\subfloat[Speed-up OPIM]{
	\label{fig:speedup_st_opim}
	\includegraphics[width=1.23in,height=1.01in]{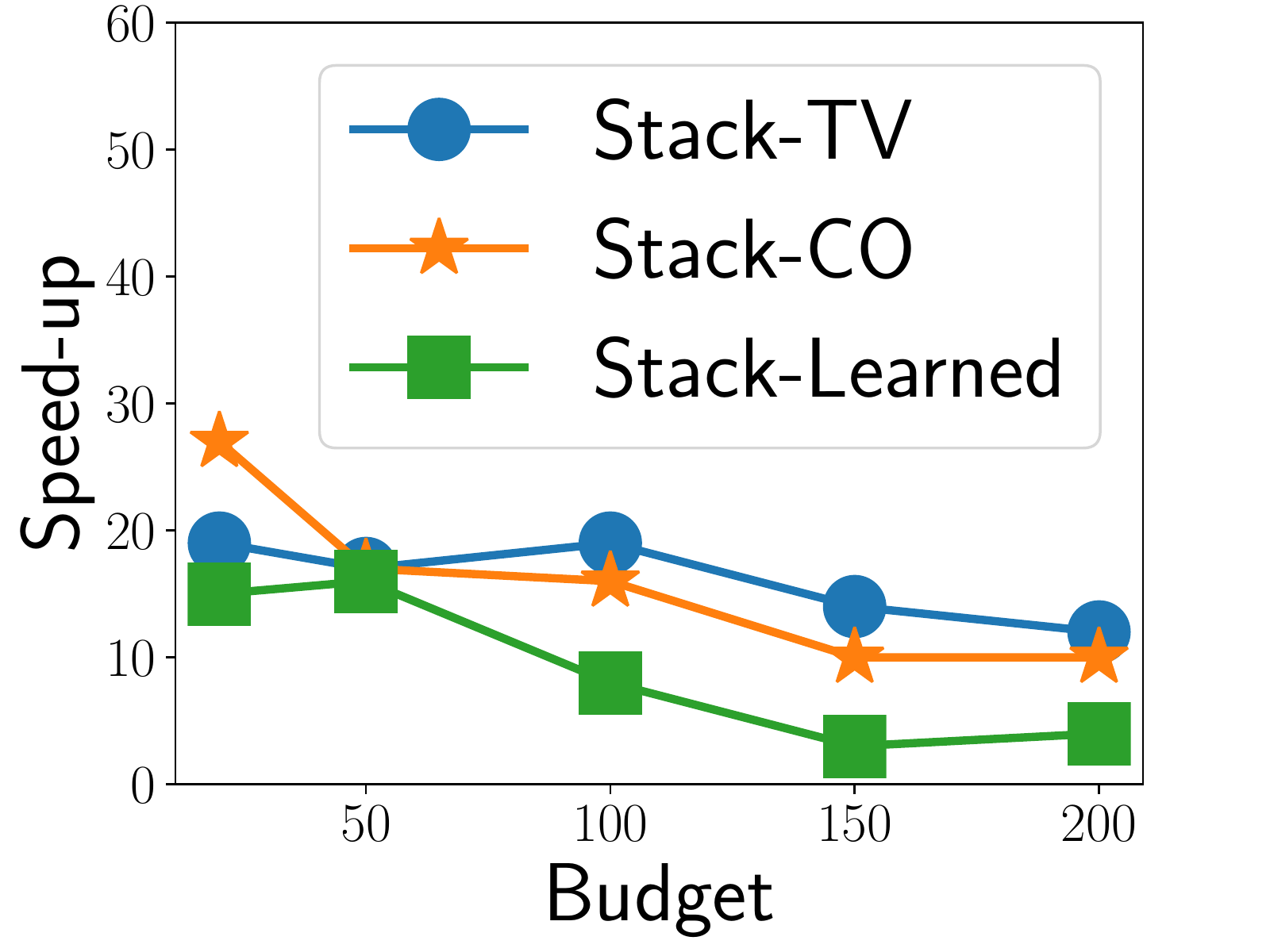}}\hspace{0.07in}
\subfloat[Running time (CO)]{
	\label{fig:time_ic}
	\includegraphics[width=1.12in]{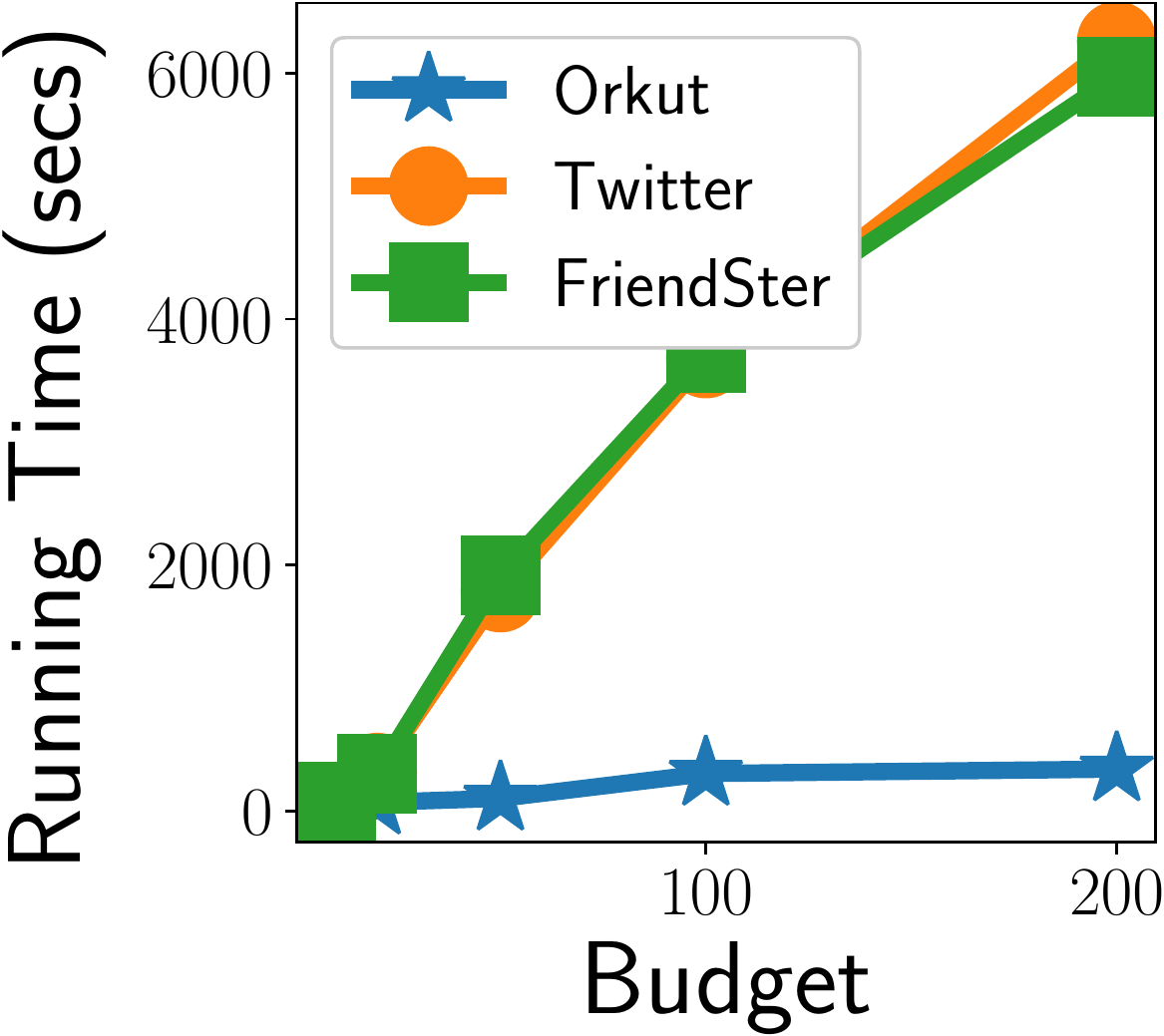}}\hspace{0.15in}
\subfloat[Running time (TV)]{
	\label{fig:time_tv}
	\includegraphics[width=1.15in]{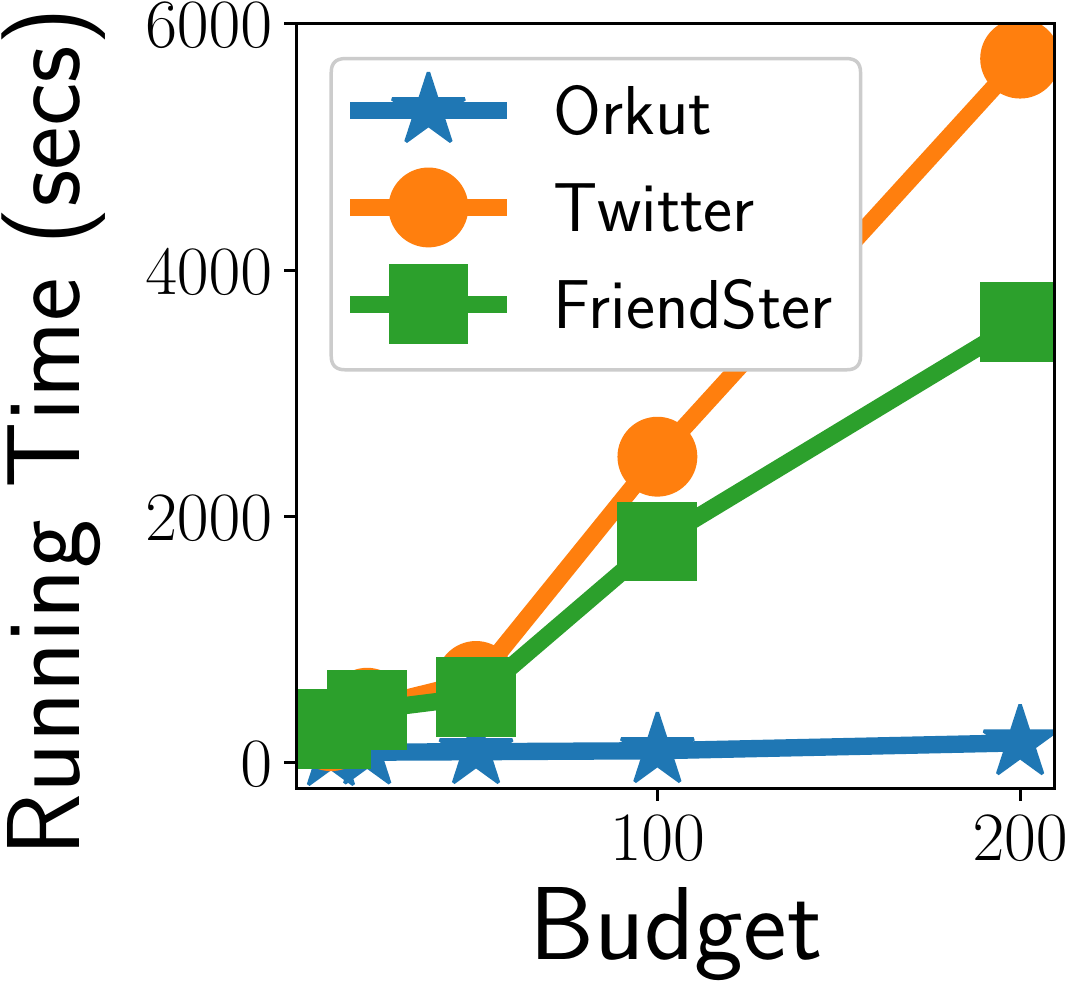}}

\subfloat[GCN Vs. QL]{
	\label{fig:gcnvsrl_time}
	\includegraphics[width=1.1in]{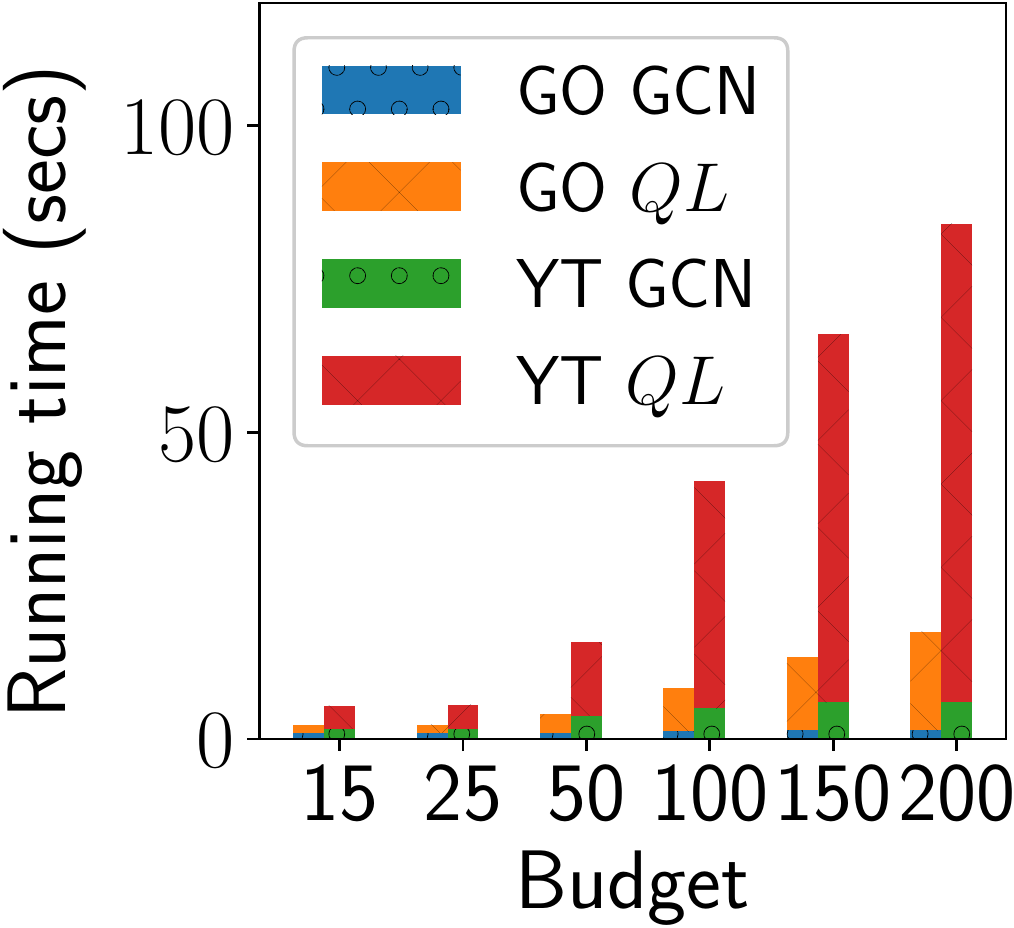}}\hspace{0.1in}
\subfloat[GCN Vs. QL]{
	\label{fig:gsvsrl}
	\includegraphics[width=1.14in]{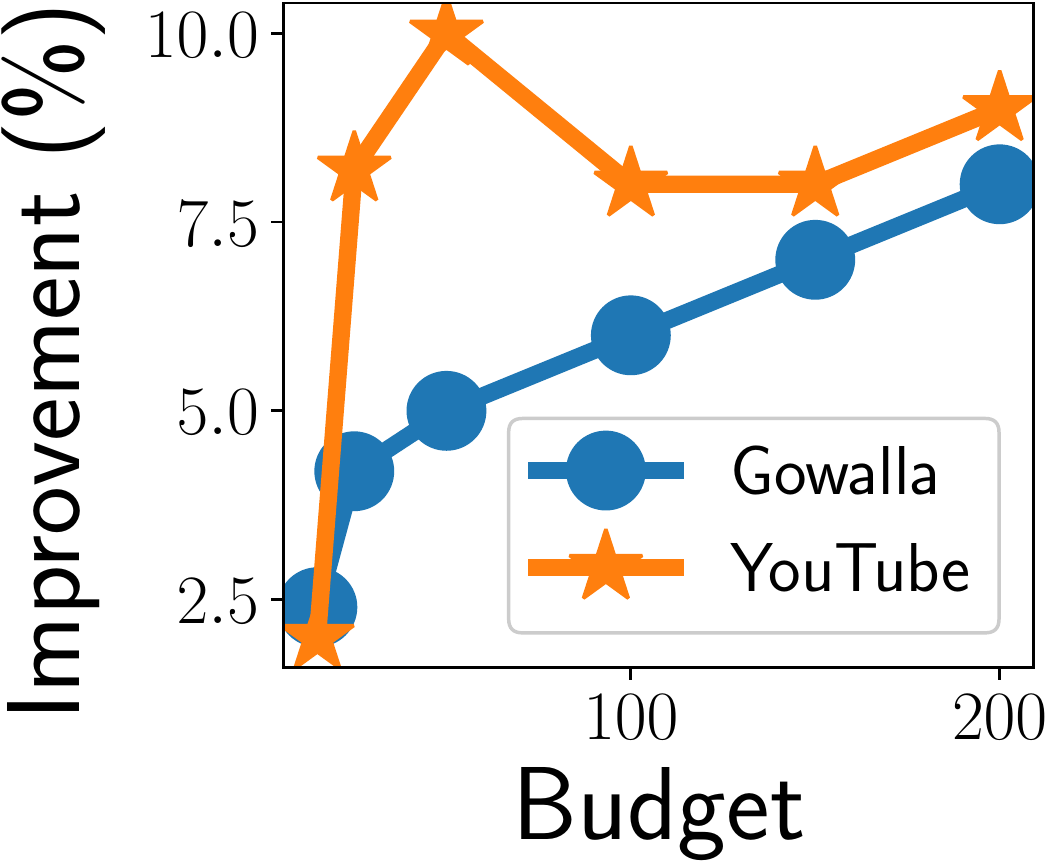}}\hspace{0.13in}
	\subfloat[MCP in Gowalla]{
	\label{fig:gcomb_v_vs_vg_run}
	\includegraphics[width=1.1in]{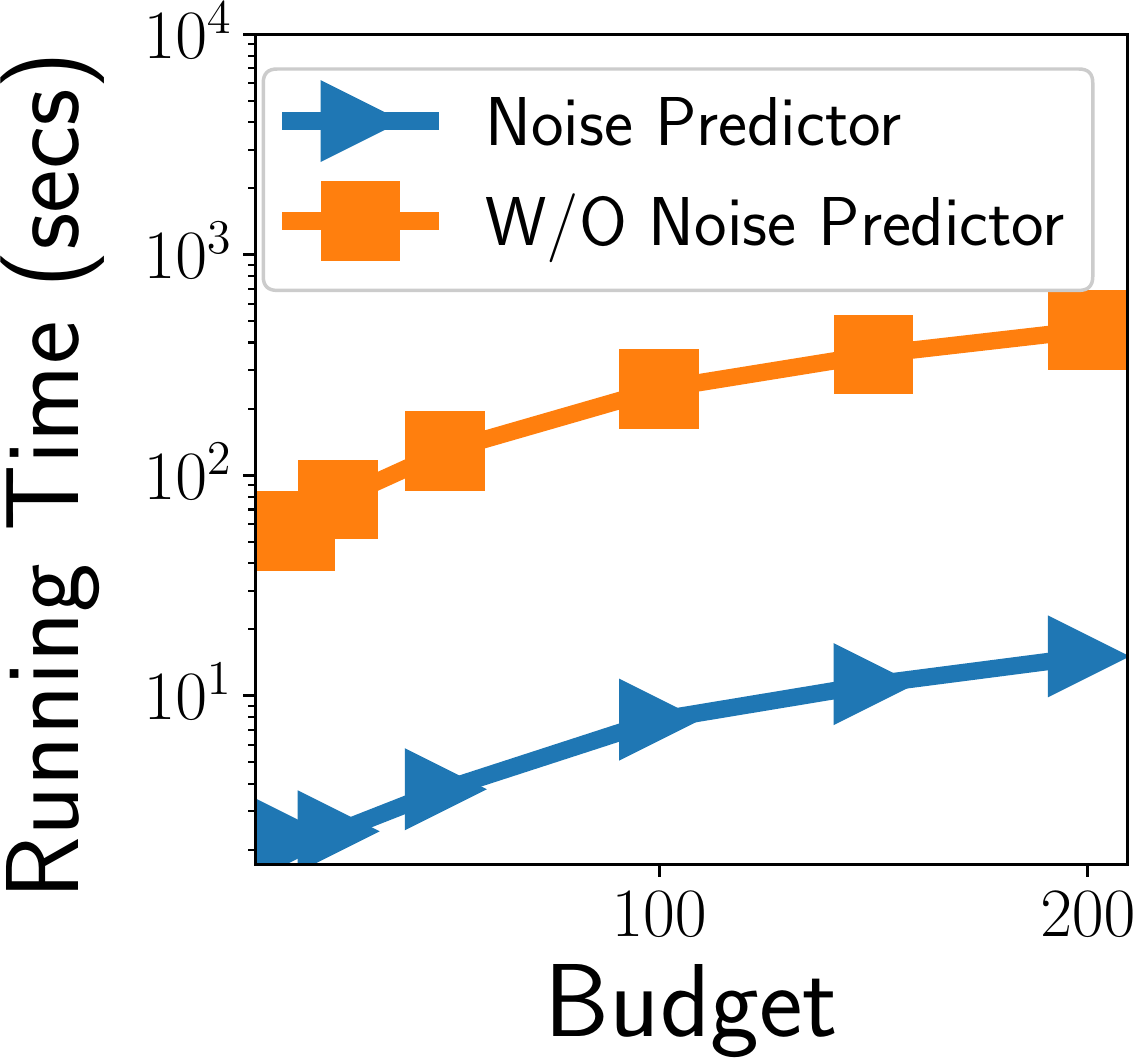}}\hspace{0.18in}
\subfloat[MCP in Gowalla]{
	\label{fig:gcomb_v_vs_vg_cov}
	\includegraphics[width=1.13in]{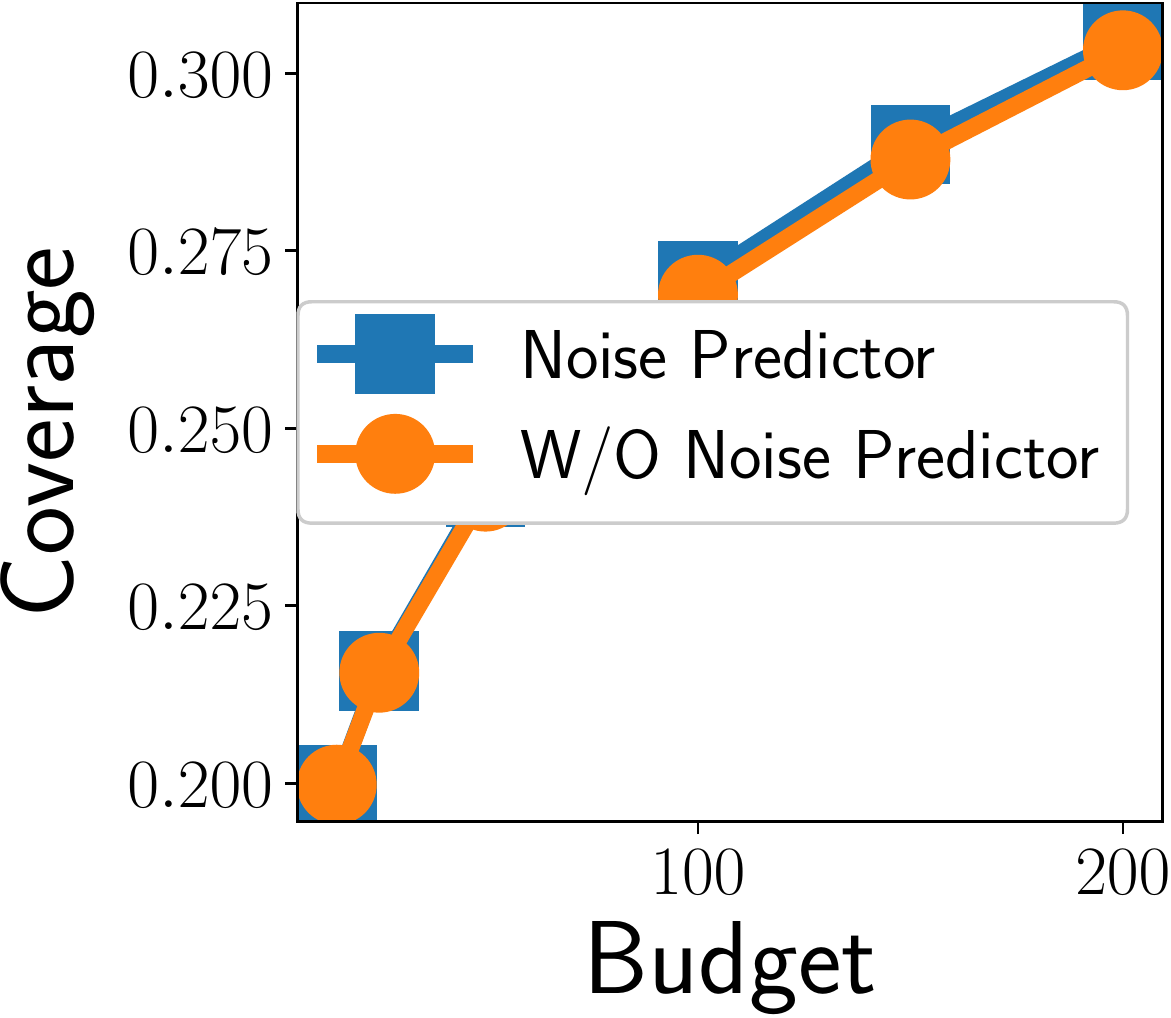}}
	\caption{ (a) Comparison of running time between \gc and \gt in YT  at $b=20$. (b-c) Speed-up achieved by \gc over IMM. (d) Speed-up achieved by \gc over NoisePruner+CELF on IM. (e-f) Speed-up achieved by \gc over OPIM. (g-h) Running times of \gc in IM in large graphs under the CO and TV edge models. (i) Distributions of running time between GCN and $Q$-learning in GO and YT datasets for MCP. (j) Improvement of $Q$-learning over GCN in MCP. (k-l) Impact of noise predictor on (k) running time and (l) quality.}
\end{figure*}

\subsection{Design Choices}
\textbf{Impact of $Q$-learning:} Since GCN predicts the expected marginal gain of a node, why not simply select the top-$b$ nodes with the highest predicted marginal gains for the given budget $b$? This is a pertinent question since, as visible in Fig.~\ref{fig:gcnvsrl_time}, majority of the time in \gc is spent on $Q$-learning. Fig.~\ref{fig:gsvsrl} shows that $Q$-learning imparts an additional coverage of up to $10\%$. 
 Improvement ($\%$) is quantified as $\frac{Coverage_{\gc}-Coverage_{GCN}}{Coverage_{GCN}}\times 100$.
	
\textbf{Impact of Noise Predictor: } Fig.~\ref{fig:gcomb_v_vs_vg_run} presents the impact of noise predictor which is close to two orders of magnitude reduction in running time. This improvement, however, does not come at the cost of efficacy (Fig.~\ref{fig:gcomb_v_vs_vg_cov}). In fact, the quality improves slightly due to the removal of noisy nodes. 
\vspace{-0.1in}

\section{Conclusion}

\sv \cite{dai2017learning} initiated the promising direction of learning combinatorial algorithms  on graphs. \gt \cite{li2018combinatorial} pursued the same line of work and enhanced  scalability to larger graphs. However, the barrier to million and billion-sized graphs remained. \gc removes this barrier with a new lightweight architecture. In particular, \gc uses a phase-wise mixture of supervised and reinforcement learning. While the supervised component predicts individual node qualities and prunes those that are unlikely to be part of the solution set, the $Q$-learning architecture carefully analyzes the remaining high-quality nodes to identify those that collectively form a good solution set.
This architecture allows \gc to generalize to unseen graphs of significantly larger sizes and convincingly outperform the state of the art in efficiency and efficacy. 
Nonetheless, there is scope for improvement. \gc is limited to set combinatorial problems on graphs. In future, we will explore a bigger class of combinatorial algorithms such as sequential and capacity constrained problems.


\pagebreak

\section*{Broader Impact}

The need to solve NP-hard combinatorial problems on graphs routinely arise in several real-world problems. Examples include facility location problems on road networks~\cite{medya2018noticeable}, strategies to combat rumor propagation in online social networks~\cite{budak}, computational sustainability~\cite{dilkina2011} and health-care~\cite{Wilder2018aamas}. Each of these problems plays an important role in our society. Consequently, designing effective and efficient solutions are important, and our current work is a step in that direction. The major impact of this paper is that good heuristics for NP-hard problems can be learned for large-scale data. While we are not the first to observe that heuristics for combinatorial algorithms can be learned, we are the first to make them scale to billion-size graphs, thereby bringing an algorithmic idea to practical use-cases. 




\begin{ack}
 The project was  partially supported by the National Science Foundation under award IIS-1817046. Further, Sahil Manchanda acknowledges the financial support from the Ministry of Human Resource Development (MHRD) of India and the Department of Computer Science and Engineering, IIT Delhi.
\end{ack}


\bibliographystyle{plain}
\bibliography{nips20}

\begin{thebibliography}{10}

\bibitem{curve}
Sandra~L Arlinghaus.
\newblock {\em Practical Handbook of Curve Fitting}.
\newblock CRC press, 1994.

\bibitem{benchmarking}
Akhil Arora, Sainyam Galhotra, and Sayan Ranu.
\newblock Debunking the myths of influence maximization: An in-depth
  benchmarking study.
\newblock In {\em Proceedings of the 2017 ACM International Conference on
  Management of Data}, SIGMOD ’17, page 651–666, New York, NY, USA, 2017.
  Association for Computing Machinery.

\bibitem{budak}
Ceren Budak, Divyakant Agrawal, and Amr El~Abbadi.
\newblock Limiting the spread of misinformation in social networks.
\newblock In {\em Proceedings of the 20th International Conference on World
  Wide Web}, WWW ’11, page 665–674, New York, NY, USA, 2011. Association
  for Computing Machinery.

\bibitem{edgeadd}
Vineet Chaoji, Sayan Ranu, Rajeev Rastogi, and Rushi Bhatt.
\newblock Recommendations to boost content spread in social networks.
\newblock In {\em Proceedings of the 21st international conference on World
  Wide Web}, pages 529--538, 2012.

\bibitem{chen2010scalable}
Wei Chen, Chi Wang, and Yajun Wang.
\newblock Scalable influence maximization for prevalent viral marketing in
  large-scale social networks.
\newblock In {\em Proceedings of the 16th ACM SIGKDD international conference
  on Knowledge discovery and data mining}, pages 1029--1038, 2010.

\bibitem{cohen2014sketch}
Edith Cohen, Daniel Delling, Thomas Pajor, and Renato~F Werneck.
\newblock Sketch-based influence maximization and computation: Scaling up with
  guarantees.
\newblock In {\em Proceedings of the 23rd ACM International Conference on
  Conference on Information and Knowledge Management}, pages 629--638, 2014.

\bibitem{dai2017learning}
Hanjun Dai, Elias Khalil, Yuyu Zhang, Bistra Dilkina, and Le~Song.
\newblock Learning combinatorial optimization algorithms over graphs.
\newblock In {\em Advances in Neural Information Processing Systems}, pages
  6348--6358, 2017.

\bibitem{dilkina2011}
Bistra Dilkina, Katherine~J. Lai, and Carla~P. Gomes.
\newblock Upgrading shortest paths in networks.
\newblock In {\em Integration of AI and OR Techniques in Constraint Programming
  for Combinatorial Optimization Problems}, pages 76--91. Springer, 2011.

\bibitem{gasse2019exact}
Maxime Gasse, Didier Ch{\'e}telat, Nicola Ferroni, Laurent Charlin, and Andrea
  Lodi.
\newblock Exact combinatorial optimization with graph convolutional neural
  networks.
\newblock In {\em Advances in Neural Information Processing Systems}, pages
  15554--15566, 2019.

\bibitem{hamilton2017inductive}
Will Hamilton, Zhitao Ying, and Jure Leskovec.
\newblock Inductive representation learning on large graphs.
\newblock In {\em NIPS}, pages 1024--1034, 2017.

\bibitem{hoeff1963}
Wassily Hoeffding.
\newblock Probability inequalities for sums of bounded random variables.
\newblock {\em Journal of the American statistical association},
  58(301):13--30, 1963.

\bibitem{cplex}
IBM.
\newblock Cplex 12.9, 2019.

\bibitem{jung2012irie}
Kyomin Jung, Wooram Heo, and Wei Chen.
\newblock Irie: Scalable and robust influence maximization in social networks.
\newblock In {\em ICDM}, pages 918--923. IEEE, 2012.

\bibitem{kempe2003maximizing}
David Kempe, Jon Kleinberg, and {\'E}va Tardos.
\newblock Maximizing the spread of influence through a social network.
\newblock In {\em KDD}, 2003.

\bibitem{adam}
Diederik~P. Kingma and Jimmy Ba.
\newblock Adam: A method for stochastic optimization, 2014.

\bibitem{celf}
Jure Leskovec, Andreas Krause, Carlos Guestrin, Christos Faloutsos, Jeanne
  VanBriesen, and Natalie Glance.
\newblock Cost-effective outbreak detection in networks.
\newblock In {\em KDD}, pages 420--429, 2007.

\bibitem{leskovec2007cost}
Jure Leskovec, Andreas Krause, Carlos Guestrin, Christos Faloutsos, Jeanne
  VanBriesen, and Natalie Glance.
\newblock Cost-effective outbreak detection in networks.
\newblock In {\em Proceedings of the 13th ACM SIGKDD international conference
  on Knowledge discovery and data mining}, pages 420--429, 2007.

\bibitem{snap}
Jure Leskovec and Andrej Krevl.
\newblock {SNAP Datasets}: {Stanford} large network dataset collection.
\newblock \url{http://snap.stanford.edu/data}, 2020.

\bibitem{li2018combinatorial}
Zhuwen Li, Qifeng Chen, and Vladlen Koltun.
\newblock Combinatorial optimization with graph convolutional networks and
  guided tree search.
\newblock In {\em NIPS}, pages 537--546, 2018.

\bibitem{medya2018noticeable}
Sourav Medya, Jithin Vachery, Sayan Ranu, and Ambuj Singh.
\newblock Noticeable network delay minimization via node upgrades.
\newblock {\em Proceedings of the VLDB Endowment}, 11(9):988--1001, 2018.

\bibitem{mirzasoleiman2014lazier}
Baharan Mirzasoleiman, Ashwinkumar Badanidiyuru, Amin Karbasi, Jan Vondr{\'a}k,
  and Andreas Krause.
\newblock Lazier than lazy greedy.
\newblock {\em arXiv preprint arXiv:1409.7938}, 2014.

\bibitem{infocom}
Shubhadip Mitra, Sayan Ranu, Vinay Kolar, Aditya Telang, Arnab Bhattacharya,
  Ravi Kokku, and Sriram Raghavan.
\newblock Trajectory aware macro-cell planning for mobile users.
\newblock In {\em 2015 IEEE Conference on Computer Communications (INFOCOM)},
  pages 792--800. IEEE, 2015.

\bibitem{netclus1}
Shubhadip Mitra, Priya Saraf, Richa Sharma, Arnab Bhattacharya, and Sayan Ranu.
\newblock Netclus: A scalable framework to mine top-k locations for placement
  of trajectory-aware services.
\newblock In {\em Proceedings of the ACM India Joint International Conference
  on Data Science and Management of Data}, pages 27--35, 2019.

\bibitem{netclus2}
Shubhadip Mitra, Priya Saraf, Richa Sharma, Arnab Bhattacharya, Sayan Ranuy,
  and Harsh Bhandari.
\newblock Netclus: A scalable framework for locating top-k sites for placement
  of trajectory-aware services.
\newblock In {\em 2017 IEEE 33rd International Conference on Data Engineering
  (ICDE)}, pages 87--90. IEEE, 2017.

\bibitem{mnih2013playing}
Volodymyr Mnih, Koray Kavukcuoglu, David Silver, Alex Graves, Ioannis
  Antonoglou, Daan Wierstra, and Martin Riedmiller.
\newblock Playing atari with deep reinforcement learning.
\newblock {\em arXiv preprint arXiv:1312.5602}, 2013.

\bibitem{pmc}
Naoto Ohsaka, Takuya Akiba, Yuichi Yoshida, and Ken{-}ichi Kawarabayashi.
\newblock Fast and accurate influence maximization on large networks with
  pruned monte-carlo simulations.
\newblock In {\em AAAI}, pages 138--144, 2014.

\bibitem{prates2019learning}
Marcelo Prates, Pedro~HC Avelar, Henrique Lemos, Luis~C Lamb, and Moshe~Y
  Vardi.
\newblock Learning to solve np-complete problems: A graph neural network for
  decision tsp.
\newblock In {\em Proceedings of the AAAI Conference on Artificial
  Intelligence}, volume~33, pages 4731--4738, 2019.

\bibitem{riedmiller2005neural}
Martin Riedmiller.
\newblock Neural fitted q iteration--first experiences with a data efficient
  neural reinforcement learning method.
\newblock In {\em ECML}, pages 317--328. Springer, 2005.

\bibitem{sutton2018reinforcement}
Richard~S Sutton and Andrew~G Barto.
\newblock {\em Reinforcement learning: An introduction}.
\newblock MIT press, 2018.

\bibitem{tang2018online}
Jing Tang, Xueyan Tang, Xiaokui Xiao, and Junsong Yuan.
\newblock Online processing algorithms for influence maximization.
\newblock In {\em Proceedings of the 2018 International Conference on
  Management of Data}, pages 991--1005, 2018.

\bibitem{imm}
Youze Tang, Yanchen Shi, and Xiaokui Xiao.
\newblock Influence maximization in near-linear time: A martingale approach.
\newblock In {\em SIGMOD}, pages 1539--1554, 2015.

\bibitem{wang2012scalable}
Chi Wang, Wei Chen, and Yajun Wang.
\newblock Scalable influence maximization for independent cascade model in
  large-scale social networks.
\newblock {\em Data Mining and Knowledge Discovery}, 25(3):545--576, 2012.

\bibitem{Wilder2018aamas}
Bryan Wilder, Han~Ching Ou, Kayla de~la Haye, and Milind Tambe.
\newblock Optimizing network structure for preventative health.
\newblock In {\em Proceedings of the 17th International Conference on
  Autonomous Agents and MultiAgent Systems}, pages 841--849, 2018.

\end{thebibliography}
\clearpage

\section{Appendix}
\renewcommand{\thesubsection}{\Alph{subsection}}
\subsection{Influence Maximization}
\label{app:im}
\begin{defn}[Social Network]
  A social network is denoted as an edge-weighted graph $G (V, E, W)$, where $V$ is the set of nodes (users), $E$ is the set of directed edges (relationships), and $W$ is the set of edge-weights corresponding to each edge in $E$. 
\end{defn}


The objective in \emph{influence maximization (IM)} is to maximize the \emph{spread} of influence in a network through activation of an initial set of $b$ \emph{seed} nodes. 
\begin{defn}[Seed Node]
  \label{def:seed}
  	A node $v \in V$ that acts as the source of information diffusion in the graph $G(V,E,W)$ is called a seed node. The set of seed nodes is denoted by $S$.
      \end{defn}
\vspace{-0.05in}
      \begin{defn}[Active Node]
	\label{def:active}
		A node $v \in V$ is deemed active if either (1) It is a seed node ($v \in S$) or (2) It is influenced by a previously active node $u \in V_{a}$. Once activated, the node $v$ is added to the set of active nodes $V_{a}$.
	      \end{defn}

Initially, the set of active nodes $V_{a}$ is the seed nodes $S$. 
The spread of influence is guided by the \emph{Independent Cascade (IC)} model. 
\vspace{-0.05in}
	      \begin{defn}[Independent Cascade \cite{kempe2003maximizing}]
		Under the IC model, time unfolds in discrete steps. At any time-step $i$, each newly activated node $u \in V_{a}$ gets one independent attempt to activate each of its outgoing neighbors $v$ with a probability $p_{(u,v)}=W(u,v)$. The spreading process terminates when in two consecutive time steps the set of active nodes remain unchanged. 
	      \end{defn}

	      \begin{defn}[Spread]
		\label{def:spread}
			%
			The spread $\Gamma(S)$ of a set of seed nodes $S$ is defined as the total proportion of nodes that are active at the end of the information diffusion process. Mathematically, $\Gamma(S)=\frac{|V_{a}|}{|V|}$.
		      \end{defn}

		      Since the information diffusion is a stochastic process, the measure of interest is the \emph{expected} value of spread. The \emph{expected} value of spread $f(\cdot)=\expectation[\flow(\cdot)]$ is computed by simulating the spread function a large number of times. The goal in IM, is therefore to solve the following problem.\\
		      
		      \textbf{Influence Maximization (IM) Problem \cite{kempe2003maximizing}:}
			Given a budget $b$, a social network $G$, and a information diffusion model $\mathcal{M}$, select a set $S^*$ of $b$ nodes such that the expected diffusion spread $f(S^*)=\expectation[\flow(S^*)]$ is maximized.

\subsection{The greedy approach}
\label{sec:greedy}
Greedy provides an $1-\frac{1}{e}$-approximation for all three NP-hard problems of MCP, MVC, and IM\cite{kempe2003maximizing}. Algorithm ~\ref{alg:greedy} presents the pseudocode. The input to the algorithm is a graph $G=(V,E)$, an optimization function $f(S)$ and the budget $b$. 
   Starting from an empty solution set $S$, Algorithm ~\ref{alg:greedy} iteratively builds the solution by adding the ``best'' node to $S$ in each iteration (lines 3-5). The best node $v^*\in V\backslash S$ is the one that provides the highest \emph{marginal gain} on the optimization function (line 4). The process ends after $b$ iterations.  

\paragraph{Limitations of greedy:} Greedy itself has scalability challenges depending on the nature of the problem. Specifically, in Alg.~\ref{alg:greedy} there are two expensive computations. First, computing the optimization function $f(\cdot)$ itself may be expensive. For instance, computing the expected spread in IM is $\#P$-hard~\cite{benchmarking}. Second, even if $f(\cdot)$ is efficiently computable, computing the marginal gain is often expensive. To elaborate, in MCP, computing the marginal gain involves a setminus operation on the neighborhood lists of all nodes $v\not\in S$ with the neighborhood of all nodes $u\in S$, where $S$ is the set of solution nodes till now. Each setminus operation consumes $O(d)$ time where $d$ is the average degree of nodes, resulting in a total complexity of $O(bd|V|)$. In IM, the cost is even higher with a complexity of $O(b|V|^2)$. In \gc, we overcome these scalability bottlenecks without compromising on the quality. \gc utilizes GCN \cite{hamilton2017inductive} to solve the first bottleneck of predicting $f(\cdot)$. Next, a deep $Q$-learning network is designed to estimate marginal gains efficiently. With this unique combination, \gc can scale to billion-sized graphs.		      
		      
		       \begin{algorithm}[t]
	\caption {The greedy approach}
	
	\label{alg:greedy}
	\begin{algorithmic}[1]
	 \REQUIRE $G = (V,E)$, optimization function $f(.)$, budget $b$
	 \ENSURE solution set $S$, $|S|=b$
		\STATE $S \leftarrow \emptyset$
		\STATE $i\leftarrow 0$
		\WHILE{$(i<b)$}
			\STATE $v^*\leftarrow \arg\max_{\forall v\in V\backslash S}\{f(S\cup\{v\})-f(S)\}$
			\STATE $S\leftarrow S\cup \{v^*\}$, $i\leftarrow i+1$
		\ENDWHILE
		\STATE \textbf{Return} $S$
   \end{algorithmic}
 \end{algorithm}
 
 		       \begin{algorithm}[t]
	\caption {The probabilistic greedy approach}
	\label{alg:pgreedy}
	\begin{algorithmic}[1]
	 \REQUIRE $G = (V,E)$, optimization function $f(.)$, convergence threshold $\Delta$
	 \ENSURE solution set $S$, $|S|=b$
		\STATE $S \leftarrow \emptyset$
		\WHILE{$(gain>\Delta)$}
			\STATE $v\leftarrow$ Choose with probability $\frac{f(S\cup\{v\})-f(S)}{\sum_{\forall v'\in V\backslash S}f(S\cup\{v'\})-f(S)}$
			\STATE $gain\leftarrow f(S\cup \{v\})-f(S)$
			\STATE $S\leftarrow S\cup \{v\}$
		\ENDWHILE
		\STATE \textbf{Return} $S$
   \end{algorithmic}
 \end{algorithm}
 
 
 \begin{algorithm}[t]
	\caption {Graph Convolutional Network (GCN)}
	\label{alg:gcn}
	\begin{algorithmic}[1]
	 \REQUIRE $G = (V,E)$, $\{score(v),$ input features $\mathbf{x}_v\:\forall v\in V\}$, budget $b$, noisy-node cut off $r^b_{max}$, depth $K$, weight matrices $\mathbb{W}^k,\:\forall k \in [1,k]$ and weight vector $\mathbf{w}$, dimension size $m_G$. 
	 \ENSURE Quality score $score'(v)$  for good  nodes and nodes in their 1-hop neighbors 
		\STATE $\mathbf{h}^0_v\leftarrow \mathbf{x}_v,\: \forall v\in V$
		\STATE $V^g \leftarrow  { \{ v \in V \mid rank(v,G) < r^b_{max} \}}$
		\STATE $V^{g,K} \leftarrow$ K-hop neighborhood of $V^g$
		
		\FOR{$k\in [1,K]$}
			\FOR{$v\in V^g \cup V^{g,K}$}
				\STATE $N(v)\leftarrow \{u|(v,u)\in E\}$
				\STATE $\mathbf{h}^k_N(v) \leftarrow \textsc{MeanPool} \left(\left\{h^{k-1}_u, \forall u\in N(v)\right\}\right)$ 
			        \STATE $\mathbf{h}^k_v \leftarrow ReLU \left (\mathbb{W}^k \cdot \textsc{Concat} \left (\mathbf{h}^k_N(v), h^{k-1}_v\right ) \right )$
			\ENDFOR
		\ENDFOR
		\STATE $\boldsymbol{\mu}_v\leftarrow \mathbf{h}_v^K,\:\forall v\in V^g\cup V^{g,1}$ 
		\STATE $score'(v) \leftarrow \mathbf{w}^T \cdot \boldsymbol{\mu }_v\:,\forall v\in V^g\cup V^{g,1} $
		
   \end{algorithmic}
 \end{algorithm}

\label{app:GCN_Training}
 \subsubsection{Training the GCN:}  For each node $v$, and its $score(v)$, which is generated using probabilistic greedy algorithm, we learn embeddings to predict this score via a Graph Convolutional Network (GCN)~\cite{hamilton2017inductive}. The pseudocode for this component is provided in Alg.~\ref{alg:gcn}.
 
 From the given set of training graphs $\{G_1,\cdots,G_t\}$, we sample a graph $G_i$ and a normalized budget $b$ from the range of budgets $(0,b^i_{max}]$, where $b^i_{max}=\max_{j=0}^m\left\{\frac{|S^i_j|}{|V_i|}\right\}$  . To recall, $S^i_j$ denotes the $j^{th}$ solution set constructed by probabilistic greedy on graph $G_i$.
 Further, quantity $r^b_{max}$ is computed from the set of training graphs and their probabilistic greedy solutions as described in  \S~\ref{sec:Training the GCN}. It is used to determine the nodes which are non-noisy for the budget $b$. 
 
 For a sampled training graph $G_i$ and budget $b$, only those nodes that have a realistic chance of being in the solution set are used to train the GCN (line 2). Each iteration in the outer loop represents the \emph{depth} (line 4). In the inner loop, we iterate over all nodes which are non-noisy and in their K-hop neighborhood (line 5). While iterating over node $v$, we fetch the current representations of $v$'s neighbors and \emph{aggregate} them through a \textsc{MeanPool} layer (lines 6-7). Specifically, for dimension $i$, we have:
$\mathbf{h}^k_N(v)_i=\frac{1}{|N(v)|}\sum_{\forall u\in N(v)} h^{k-1}_{u_i}$. 
 The aggregated vector is next \emph{concatenated} with the representation of $v$, which is then fed through a fully connected layer with \emph{ReLU} activation function (line 8), where ReLU is the \emph{rectified linear unit} ($ReLU(z) = max(0, z)$). 
The output of this layer becomes the input to the next iteration of the outer loop. Intuitively, in each iteration of the outer loop, nodes aggregate information from their local neighbors, and with more iterations, nodes incrementally receive information from neighbors of higher depth (i.e., distance).

At depth $0$, the embedding of each node $v$ is $h^0_v=\mathbf{x}_v$, while the final embedding is $\boldsymbol{\mu}_v= h^K_v$ (line 9). In hidden layers, Alg.~\ref{alg:gcn} requires the parameter set $\mathbb{W}=\{\mathbb{W}^k, k=1,2,\cdots,K\}$ to compute the node representations (line 8). Intuitively, $\mathbb{W}^k$ is used to propagate information across different depths of the model. To train the parameter set $\mathbb{W}$ and obtain predictive representations, the final representations are passed through another fully connected layer to obtain their predicted value $score'(v)$ (line 10). Further, the inclusion of 1-hop neighbors($V^{g,1}$) of $V^g$ in line 9 and line 10 is only for the importance sampling procedure. The parameters $\Theta_G$ for the proposed framework are therefore the weight matrices $\mathbb{W}$ and the weight vector $\mathbf{w}$. 
We draw multiple samples of graphs and budget and minimize the next equation using Adam optimizer \cite{adam} to learn the GCN parameters, $\Theta_G$.
\begin{equation}
J(\Theta_G)=\sum_{\sim\langle G_i, b \rangle}\frac{1}{|V^g_i|}\sum_{\forall v\in V^g_i}(score(v)-score'(v))^2
\end{equation}
In the above equation,  $V^g_i$ denotes the set of good nodes for budget $b$ in graph $G_i$.

\textbf{Defining $\mathbf{x}_v$: }  The initial feature vector $\mathbf{x}_v$ at depth $0$ should have the raw features that are relevant with respect to the combinatorial problem being solved. For example, in Influence Maximization (IM), the summation of the outgoing edge weights of a node is an indicator of its own spread.

\subsection{Q-learning}
\label{app:qlearning}
The pseudocode of the Q-learning component is provided in Algorithm ~\ref{algo:qlearning}. 

\textbf{Exploration vs. Exploitation: }In the initial phases of the training procedure, the prediction may be inaccurate as the model has not yet received enough training data to learn the parameters. Thus, with $\epsilon=\max\{0.05,0.9^t\}$ probability we select a random node from $C_t$. Otherwise, we trust the model and choose the predicted best node. Since $\epsilon$ decays exponentially with $t$, as more training samples are observed, the likelihood to trust the prediction goes up. This policy is commonly used in practice and inspired from bandit learning \cite{dai2017learning}.

\textbf{$n$-step $Q$-learning: }$n$-step $Q$-learning incorporates delayed rewards, where the final reward of interest is received later in the future during an episode (lines 6-9 in Alg.~\ref{algo:qlearning}). The key idea here is to wait for $n$ steps before the approximator’s parameters are updated and therefore, more accurately estimate future rewards.

\textbf{Fitted $Q$-learning: }For efficient learning of the parameters, we perform \textit{fitted $Q$-iteration} \cite{riedmiller2005neural}, which results in faster convergence using a neural network as a function approximator \cite{mnih2013playing}. Specifically, instead of updating the $Q$-function sample-by-sample, fitted $Q$-iteration uses \textit{experience replay} with a batch of samples. Note that the training process in Alg.~\ref{algo:qlearning} is independent of budget. The $Q$-learning component learns the best action to take under a given circumstance (state space).
\begin{algorithm}[h]
\caption{Learning $Q$-function}
\label{algo:qlearning}
\begin{algorithmic}[1] 
\REQUIRE  $\forall v\in V^g,\:score'(v)$, hyper-parameters $M$, $N$ relayed to fitted $Q$-learning, number of episodes $L$ and sample size $T$.
\ENSURE Learn parameter set $\Theta_Q$
\STATE Initialize experience replay memory $M$ to capacity $N$
\FOR{episode $e \leftarrow 1$ to $L$}
\FOR{step $t \leftarrow 1$ to $T$}
\STATE $v_t \leftarrow
\begin{cases}
\text{random node }v \not\in S_t \text{ with probability }\epsilon=\max\{0.05,0.9^t\} \\
\text{argmax}_{v \not\in S_t} Q'_n(S_t,v,\Theta_Q)\text{ otherwise}
\end{cases}$
\STATE $S_{t+1}\leftarrow S_t\cup\{v_t\}$
\IF{ $t \geq n$}
\STATE Add tuple $(S_{t-n}, v_{t-n}, \sum_{i=t-n}^{t}r(S_i,v_i), S_t)$ to $M$ 
\STATE Sample random batch $B$ from $ M$
\STATE Update $\Theta_Q$ by Adam optimizer for $B$
\ENDIF
\ENDFOR
\ENDFOR
\RETURN $\Theta_Q$
\end{algorithmic}
\end{algorithm}


\subsection{Complexity Analysis of the Test Phase}
\label{app:complexity}
For this analysis, we assume the following terminologies. $d$ denotes the average degree of a node. $m_{G}$ and $m_Q$ denote the embedding dimensions in the GCN and $Q$-learning neural network respectively. As already introduced earlier, $b$ denotes the budget and $V$ is the set of all nodes.

\subsubsection{Time Complexity}

In the test phase, a forward pass through the GCN is performed. Although the GCN's loss function only minimizes the prediction with respect to the good nodes, due to message passing from neighbors, in a $K$-layered GCN , we need the $K$-hop neighbors of the good nodes (we will denote this set as $V^{g,K}$). 
Each node in $V^{g,K}$ draws messages from its neighbors on which first we perform \textsc{MeanPool} and then dot products are computed to embed in a $m_G$-dimensional space. Applying \textsc{MeanPool} consumes $O(dm_{G})$ time since we need to make a linear pass over $d$ vectors of $m_G$ dimensions. Next,  we perform $m_G$ dot-products on vectors of $m_G$ dimensions. Consequently, this consumes $O(m_G^2)$ time. Finally, this operation is repeated in each of the $K$ layers of the GCN. Since $K$ is typically $1$ or $2$, we ignore this factor. Thus, the total time complexity of a forward pass is $O(|V^{g,K}|(dm_G+m_G^2))$. 

The budget ($b$) number of forward passes are made in the $Q$-learning component over only $V^g$ (the set of good non-noisy nodes). In each pass, we compute locality and the predicted reward. To compute locality, we store the neighborhood as a hashmap, which consumes $O(d)$ time per node. Computing predicted reward involves dot products among vectors of $O(m_Q)$ dimensions. Thus, the total time complexity of the $Q$-learning component is $O(|V^g|b(d+m_Q))$.

For noise predictor, we need to identify the top-$l$ nodes based on $\mathbf{x_v}$ (typically the out-degree weight). $l$ is determined by the noise predictor as a function of $b$. This consumes $|V|log(l)$ time through the use of a min-Heap.

Combining all three components, the total time complexity of  \gc is $O(|V|log(l)+|V^{g,K}|(dm_G+m_G^2)+|V^g|b(d+m_Q))$. Typically, $l<<|V|$ (See Fig.~\ref{fig:vb}) and may be ignored. Thus, the total time complexity is $\approx O(|V|+|V^{g,K}|(dm_G+m_G^2)+|V^g|b(d+m_Q))$.

\subsubsection{Space Complexity}

During testing, the entire graph is loaded in memory and is represented in linked list form which takes $O(|V|+|E|)$ space. The memory required for K layer GCN is $O(Km_G^2)$. Overall space complexity for GCN phase is $O(|V|+|E| + Km_G^2)$.

For the Q-learning component, entire graph is required for importance sampling purpose. It requires $O(|V|+|E|)$ space. Further, the space required for parameters for Q-network is $O(m_Q)$, since input dimension for Q-network is fixed to 2. Thus, space complexity of Q-network is $O(|V| + |E| + m_Q)$.
Therefore, total space complexity of \gc is $O(|V|+|E| + Km_G^2 + m_Q)$.

\subsection{Number of parameters}
\label{app:neuralparameters}
\textbf{GCN:} If $m_G$ is the embedding dimension, each $\mathbb{W}_k$ is a matrix of dimension $m_G^2$. Other than $\mathbb{W}_k$, we learn another parameter $\boldsymbol{w}$ in the final layer (line 10 of Alg.~\ref{alg:gcn}) of $m_G$ dimension. Thus, the total parameter size is $K\times m_G^2+m_G$, where $K$ is the number of layers in GCN.

\textbf{Q-learning:} If $m_Q$ is the dimension of the hidden layer in $Q$-learning, each of $\Theta_1$, $\Theta_2$, and $\Theta_3$ is a matrix of dimension $m_Q\times 2$. $\Theta_4$ is a vector of dimension $3m_Q$. Thus, the total number of parameters is $9m_Q$.
\subsection{Proof of Theorem~\ref{thm:samplesize}}
\label{app:samplesize}
 A sampling procedure is \emph{unbiased} if it is possible to estimate the mean of the target population from the sampled population, i.e., $\mathbb{E}[\hat{\mu}_{N_z(V^g)}]=\mu_{N(V^g)}=\frac{\sum_{v\in N(V^g)} I(v)}{|N(V^g)|}= \frac{1}{|N(V^g)|}$
, where $\hat{\mu}_{N_z(V^g)}$ is the \emph{weighted average} over the samples in $N_z(V^g)$. Specifically,  
\begin{equation}
    \hat{\mu}(N_z(V^g))= \frac{1}{\sum_{v\in N_z(V^g)}\hat{w}_v}\sum_{v\in N_z(V^g)} \hat{w}_v \cdot  I(v)
\end{equation}
 where $\hat{w}_v= \frac{1}{I(v)}$. 

\begin{lem} \label{lemma:imp_unbias}
Importance sampling is an unbiased estimate of $\mu_{N(V^g)}$, i.e., 
$\mathbb{E}\left[\hat{\mu}_{N_z(V^g)}\right]=\mu_{N(V^g)}$, if $\hat{w}_v= \frac{1}{I(v)}$.
\end{lem}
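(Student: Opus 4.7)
The strategy is a direct computation: plug the weights $\hat{w}_v = 1/I(v)$ into the self-normalized importance sampling estimator and evaluate both the numerator and denominator using the fact that each sampled element is drawn from $N(V^g)$ with probability $I(v)$. First I would exploit the algebraic cancellation $\hat{w}_v \cdot I(v) = 1$ for every $v$: the numerator $\sum_{v \in N_z(V^g)} \hat{w}_v I(v)$ collapses to the deterministic constant $|N_z(V^g)| = z$, so all stochasticity of the estimator lives in the denominator and $\hat{\mu}_{N_z(V^g)} = z\big/\sum_{v \in N_z(V^g)} 1/I(v)$.

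Next I would compute the expectation of a single weight: since $v$ is sampled according to $I$, $\mathbb{E}[1/I(v)] = \sum_{u \in N(V^g)} I(u) \cdot (1/I(u)) = |N(V^g)|$. By linearity across the $z$ i.i.d.\ draws, $\mathbb{E}\!\left[\sum_{v \in N_z(V^g)} 1/I(v)\right] = z \cdot |N(V^g)|$. Finally, because the importances form a probability distribution $\sum_v I(v) = 1$, the target mean is $\mu_{N(V^g)} = \sum_v I(v)/|N(V^g)| = 1/|N(V^g)|$, which agrees with $z/(z|N(V^g)|)$, i.e., the ratio of the expected numerator to the expected denominator.

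The main obstacle is reconciling the equality $\mathbb{E}[\hat{\mu}_{N_z(V^g)}] = \mu_{N(V^g)}$ with the fact that $\mathbb{E}[X/Y] \neq \mathbb{E}[X]/\mathbb{E}[Y]$ in general for random $Y$. Because the numerator is deterministic here, the literal claim reduces to $\mathbb{E}\!\left[1/\sum_v 1/I(v)\right] = 1/(z|N(V^g)|)$, which is not quite the reciprocal of the expected denominator. I would handle this in one of two standard ways: (a) invoke the paper's own definition of \emph{unbiased} from the start of the appendix (recoverability of the target mean from the sample) and appeal to the law of large numbers, $\frac{1}{z}\sum 1/I(v) \to |N(V^g)|$, so that $\hat{\mu}_{N_z(V^g)} \to \mu_{N(V^g)}$ in probability and the asymptotic bias is zero; or (b) replace the random normalizer $\sum_v \hat{w}_v$ by its expectation $z|N(V^g)|$ (a standard choice in importance sampling), in which case the estimator becomes deterministically equal to $1/|N(V^g)|$ and the equality is exact. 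Either interpretation is sufficient to feed the concentration argument used in Theorem~\ref{thm:samplesize}.
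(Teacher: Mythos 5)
Your computation is the same as the paper's: the paper also evaluates $\mathbb{E}\bigl[\sum_{v\in N_z(V^g)}\hat{w}_v\bigr]=z\,|N(V^g)|$ and $\mathbb{E}\bigl[\sum_{v\in N_z(V^g)}\hat{w}_v\cdot I(v)\bigr]=z$ and divides. The notable difference is that the paper simply writes $\mathbb{E}[\hat{\mu}_{N_z(V^g)}]=\mathbb{E}[\text{numerator}]/\mathbb{E}[\text{denominator}]$ with no justification, i.e., it silently commits exactly the step you flag as the main obstacle. You are right to be suspicious: since $\hat{w}_v I(v)=1$, the numerator is the constant $z$ and the estimator is $z/\sum_v \hat{w}_v$; by Jensen's inequality $\mathbb{E}[1/Y]\ge 1/\mathbb{E}[Y]$ for a positive random denominator, so the self-normalized estimator is exactly unbiased only when the denominator is degenerate (all importances equal), and in general the lemma holds only asymptotically. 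Your two remedies --- asymptotic unbiasedness via the law of large numbers, or replacing the random normalizer by its expectation $z|N(V^g)|$ --- are both standard and either one suffices for the downstream Hoeffding argument in Theorem~\ref{thm:samplesize}, which only needs concentration of the sample mean, not exact unbiasedness of the ratio. So your proposal is not merely equivalent to the paper's proof; it repairs a real (if minor) gap in it.
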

\begin{proof}
\begin{equation}
\nonumber
\mathbb{E}[\hat{\mu}_{N_z(V^g)}]= \frac{1}{\mathbb{E}[\sum_{v\in N_z(V^g)}\hat{w}_v]}\cdot\mathbb{E}\left[\sum_{v\in N_z(V^g)} \hat{w}_v \cdot  I(v)\right]
\end{equation}

If we simplify the first term, we obtain
\begin{alignat}{5}
\nonumber
\mathbb{E}\left[{\sum_{v\in N_z(V^g)}\hat{w}_v}\right] &= z\times\mathbb{E}[{\hat{w}_v}]\\
\nonumber
=z\times\sum_{\forall v\in N(V^g)}{{\hat{w}_v}\cdot  I(v)} 
&=|N(V^g)|\times z
\nonumber
\end{alignat}

From the second term, we get,
\begin{equation}
\nonumber
\mathbb{E}\left[\sum_{v\in N_z(V^g)} \hat{w}_v \cdot  I(v)\right]=z\times\mathbb{E}[\hat{w}_v \cdot  I(v)]=z
\end{equation}

Combining these two, $ \mathbb{E}[\hat{\mu}_{N_z(V^g)}]=
\frac{z}{|N(V^g)|\times z}=\mu_{N(V^g)}$.
\end{proof}

Armed with an unbiased estimator, we show that a bounded number of samples provide an accurate estimation of the locality of a node.
\begin{lem}\label{lemma:approx_sample}
[\textbf{Theorem 1 in main draft}] Given $\epsilon$ as the error bound,
$P\left[|\hat{\mu}_{N_z(V^g)} - \mu_{N(V^g)}| < \epsilon\right]>1-\frac{1}{|N(V^g)|^2}$,
where $z$ is $O\left(\frac{\log |N(V^g)|}{\epsilon^2}\right)$.
\end{lem}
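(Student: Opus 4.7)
The plan is to leverage Lemma~\ref{lemma:imp_unbias}, which guarantees that $\hat{\mu}_{N_z(V^g)}$ is an unbiased estimator of $\mu_{N(V^g)}$, and then invoke a Hoeffding-type concentration inequality to control the deviation. Since the target probability $1-1/|N(V^g)|^2$ is of the form $1-\delta$ with $\delta$ polynomially small in $|N(V^g)|$, Hoeffding's inequality directly yields the logarithmic sample complexity claimed in the statement.

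The first step is to express the estimator as an average of $z$ independent summands. Each draw selects $v_i \in N(V^g)$ with probability $I(v_i)$, and the $i$-th contribution to the weighted numerator (resp.\ denominator) is $\hat{w}_{v_i} \cdot I(v_i)$ (resp.\ $\hat{w}_{v_i}$). By the computation in Lemma~\ref{lemma:imp_unbias}, these summands have expectation $1$ and $|N(V^g)|$ respectively, so the ratio has expected value $\mu_{N(V^g)} = 1/|N(V^g)|$. Next, I would note that the importance values $I(v) \in [0,1]$ together with the normalization by $z$ place the per-sample contributions in a bounded range $[0,R]$, so Hoeffding yields
\begin{equation}
\nonumber
P\!\left[|\hat{\mu}_{N_z(V^g)} - \mu_{N(V^g)}| \geq \epsilon\right] \leq 2\exp\!\left(-\frac{2z\epsilon^2}{R^2}\right).
\end{equation}
Setting the right-hand side to $1/|N(V^g)|^2$ and solving gives
\begin{equation}
\nonumber
z \geq \frac{R^2}{2\epsilon^2}\bigl(2\ln|N(V^g)| + \ln 2\bigr) = O\!\left(\frac{\log|N(V^g)|}{\epsilon^2}\right),
\end{equation}
which matches the claimed sample complexity.

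The main obstacle is the ratio-of-sums structure of the importance-weighted estimator: the weights $\hat{w}_v = 1/I(v)$ can be arbitrarily large when some importances are tiny, which prevents a naive application of Hoeffding to the raw weighted terms. The cleanest way around this is to apply Hoeffding separately to the numerator $\frac{1}{z}\sum_i \hat{w}_{v_i} I(v_i)$ (whose summands equal $1$ and are therefore trivially bounded) and to the denominator $\frac{1}{z}\sum_i \hat{w}_{v_i}$ (treating $\hat{w}_{v_i}$ as bounded by $1/I_{\min}$ over the support), then combine via a union bound and a first-order expansion of the ratio around its mean to transfer both deviations into a single $\epsilon$-bound on $\hat{\mu}_{N_z(V^g)}$. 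The constants introduced in this step get absorbed into the big-$O$, preserving the final $O(\log|N(V^g)|/\epsilon^2)$ rate and the $1-1/|N(V^g)|^2$ success probability.
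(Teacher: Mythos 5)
Your main calculation is the same as the paper's: treat the sampled importances as i.i.d.\ random variables bounded in $[0,1]$, apply Hoeffding's inequality to obtain $P\left[|\hat{\mu}_{N_z(V^g)} - \mu_{N(V^g)}| \geq \epsilon\right] \leq 2\exp\left(-2z\epsilon^2\right)$, set the right-hand side to $|N(V^g)|^{-2}$, and solve to get $z=\frac{\log(2|N(V^g)|^2)}{2\epsilon^2}=O\left(\frac{\log|N(V^g)|}{\epsilon^2}\right)$. The paper does exactly this, taking each interval $[a_i,b_i]=[0,1]$ so that your $R$ equals $1$.

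Where you go beyond the paper is in flagging the ratio-of-sums structure of the weighted estimator, which the paper silently ignores --- but your proposed repair does not work as stated. With $\hat{w}_v = 1/I(v)$, every numerator summand $\hat{w}_{v_i}\cdot I(v_i)$ is identically $1$, so the numerator is deterministic and there is nothing to concentrate; all of the randomness sits in the denominator $\sum_i \hat{w}_{v_i}=\sum_i 1/I(v_i)$, whose summands are bounded only by $1/\min_{v} I(v)$. That quantity is not an absolute constant (importances can be arbitrarily small since $I(v)$ is a normalized score over the whole neighborhood), so the ``constants'' from your union-bound-plus-linearization step are not absorbed into the big-$O$, and recovering the claimed $O(\log|N(V^g)|/\epsilon^2)$ rate this way would require an extra assumption lower-bounding $I(v)$. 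The paper sidesteps the issue entirely by defining $X_i$ to be the raw importance of the $i$-th sampled node and applying Hoeffding to the plain average of the $X_i$ --- i.e., it implicitly analyzes an unweighted sample mean rather than the weighted ratio of Lemma~\ref{lemma:imp_unbias}; your first two paragraphs take the same shortcut. In short, your main line reproduces the paper's proof, and the additional repair you sketch for the ratio estimator is the one place where a genuine gap remains.
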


\begin{proof}
The samples can be viewed as random variables associated with the selection of a node. More specifically, the random variable, $X_i$, is the importance associated with the selection of the $i$-th node in the importance sample $N_z(V^g)$. Since the samples provide an unbiased estimate (Lemma \ref{lemma:imp_unbias}) and are i.i.d., we can apply \emph{Hoeffding's inequality}~\cite{hoeff1963} to bound the error of the mean estimates:
\begin{equation}
\nonumber
P\left[|\hat{\mu}_{N_z(V^g)}-\mu_{N(V^g)}| \geq \epsilon\right ]\leq \delta
\end{equation}
where $\delta=2\exp\left(-\frac{2z^2\epsilon^2}{\mathcal{T}}\right)$, $\mathcal{T} =\sum\limits_{i=1}^{z}(b_i-a_i)^2$, and each $X_i$ is strictly bounded by the intervals $[a_i, b_i]$. Since we know that importance is bounded within $[0,1]$,  $[a_i,b_i]=[0,1]$. Thus, 
\begin{equation}
\nonumber
\delta=2\exp\left(-\frac{2z^2\epsilon^2}{z}\right) = 2\exp\left(-2z\epsilon^2\right)
\end{equation} 
By setting the number of samples $z=\frac{\log (2|N(V^g)|^2)}{2\epsilon^2}$,
we have, 
\begin{equation}
\nonumber
p\left[|\hat{\mu}_{N_z(V^g)} - \mu_{N(V^g)}| < \epsilon\right]>1-\frac{1}{|N(V^g)|^2}
\end{equation}
\end{proof}

\subsection{Training time distribution of different phases of \gc} 
\label{app:training_time_dist}

Fig. \ref{fig:prep_gc_rl_traintime} shows the distribution of time spent in different phases of training of \gc. Prob-Greedy refers to the phase in which probabilistic greedy algorithm is run on training graphs to obtain training labels for GCN component. Train-GCN and Train-QL refers to the training phases of GCN and Q-network respectively.

\begin{figure}[t]
\centering
	\includegraphics[width=1.6in]{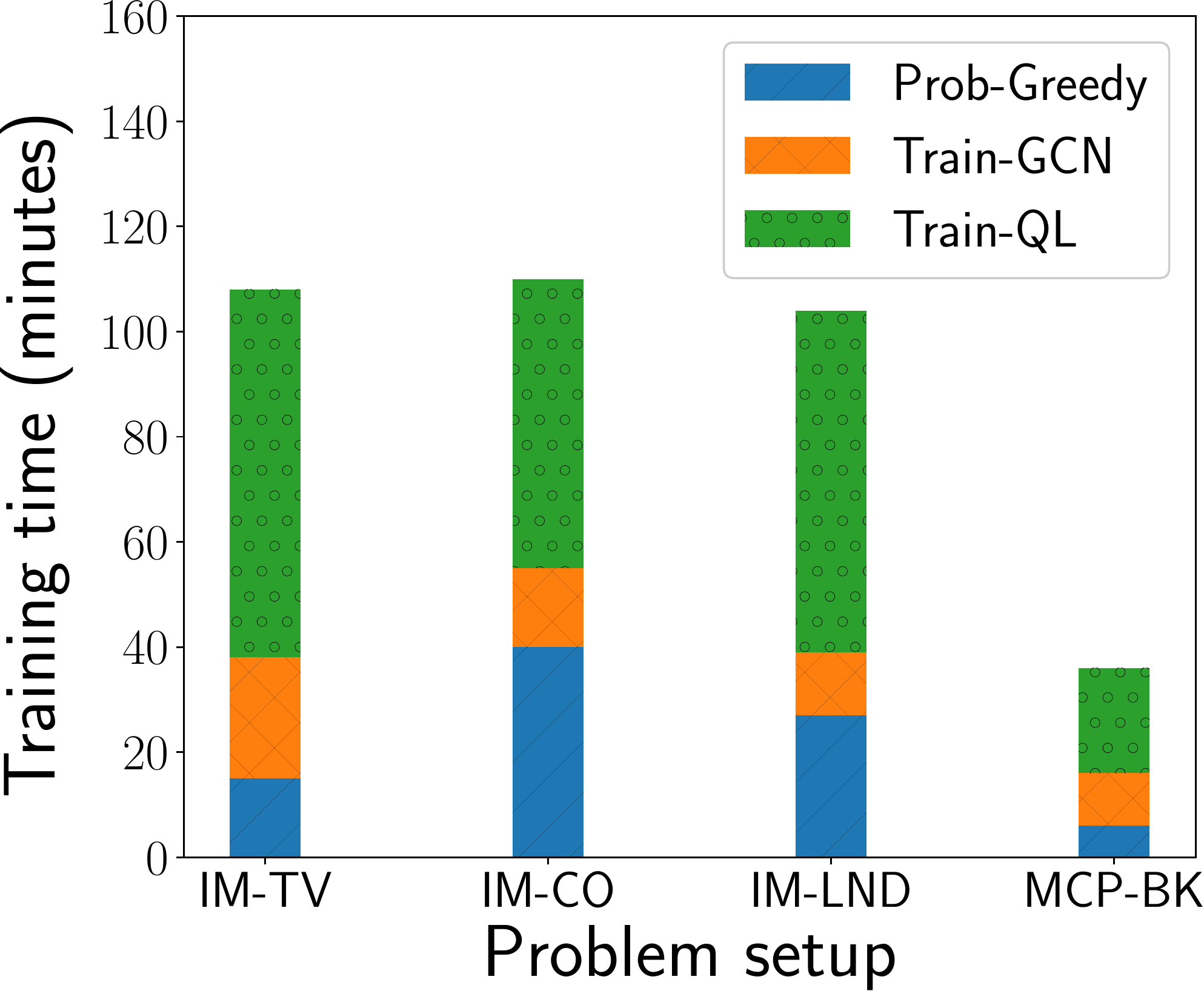}
	\caption{Phase-wise training time distribution of \gc}\label{fig:prep_gc_rl_traintime}
\end{figure}


\subsection{Parameters} 
\label{app:parameters}
\gc has two components: GCN and the $Q$-Learning part. GCN is trained for $1000$ epochs with a learning rate of $0.001$, a dropout rate of $0.1$ and a convolution depth ($K$) of $2$. The embedding dimension is set to $60$. For training the $n$-step $Q$-Learning neural network, $n$ and discount factor $\gamma$ are set to $2$ and $0.8$ respectively, and a learning rate of $0.0005$ is used. The raw feature $x_v$ of node $v$ in the first layer of GCN  is set to the summation of its outgoing edge weights. For undirected, unweighted graphs, this reduces to the degree. In each epoch of training, $8$ training examples are sampled uniformly from the Replay Memory with capacity $N=50$ as described in Alg.~\ref{algo:qlearning}. The sampling size $z$, in terms of percentage, is varied at $[1\%, 10\%,30\%,50\%,75\%,99\%]$ on the validation sets, and the best performing value is used. As we will show later in Fig.\ref{fig:sampling}, $10\%$ is often enough.

For all train sets, we split into two equal halves, where the first half is used for training and the second half is used for validation. For the cases where we have only one training graph, like  BrightKite(BK) in MCP and MVC, we randomly pick 50\% of the edges for the training graph and the remaining 50\% for the validation graph. 
The noise predictor interpolators in MCP are fitted on 10\% randomly edge-sampled subgraphs from Gowallah, Twitter-ew and YouTube. During testing, the remaining 90\%  subgraph is used, which is edge disjoint to the 10\% of the earlier sampled subgraph. 

\subsection{Extracting Subgraph from Gowalla}
\label{app:go}
To extract the subgraph, we select a node proportional to its degree. Next, we initiate a breadth-first-search from this node, which expands iteratively till $X$ nodes are reached, where $X$ is the target size of the subgraph to be extracted. All of these $X$ nodes and any edge among these nodes become part of the subgraph.

\subsection{Comparison with \textsc{Opim} on billion sized graphs }
\label{app:IM_OPIM_Large}

\begin{figure}[t]
\centering
\subfloat[Speed-up OPIM Orkut]{
	\label{fig:opim_speed_up_orkut}
	\includegraphics[width=1.5in,  height=1.2in]{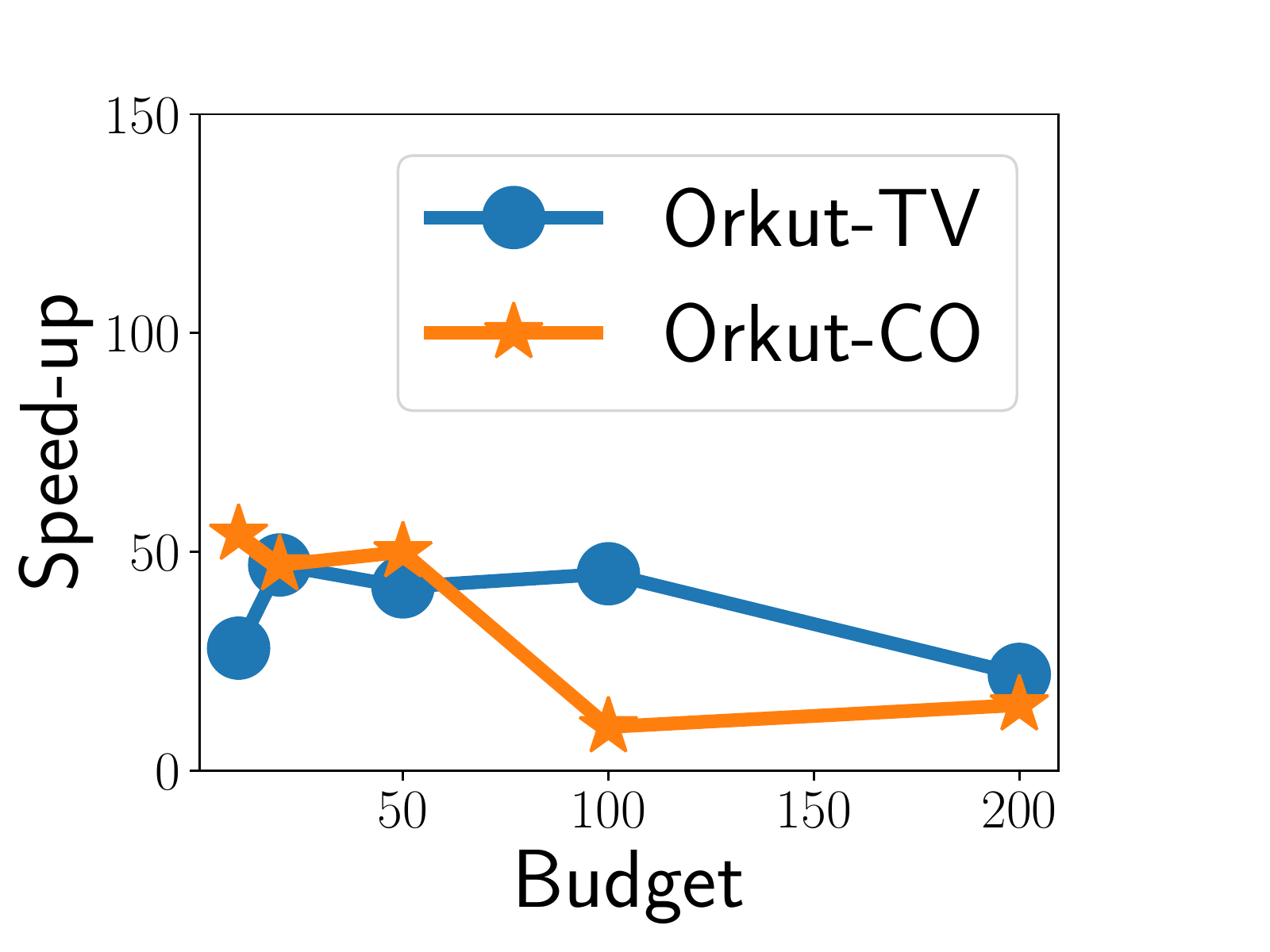}}
\subfloat[Speed-up OPIM Friendster]{
	\label{fig:opim_speed_up_friendster0.}
	\includegraphics[width=1.6in, height=1.2in]{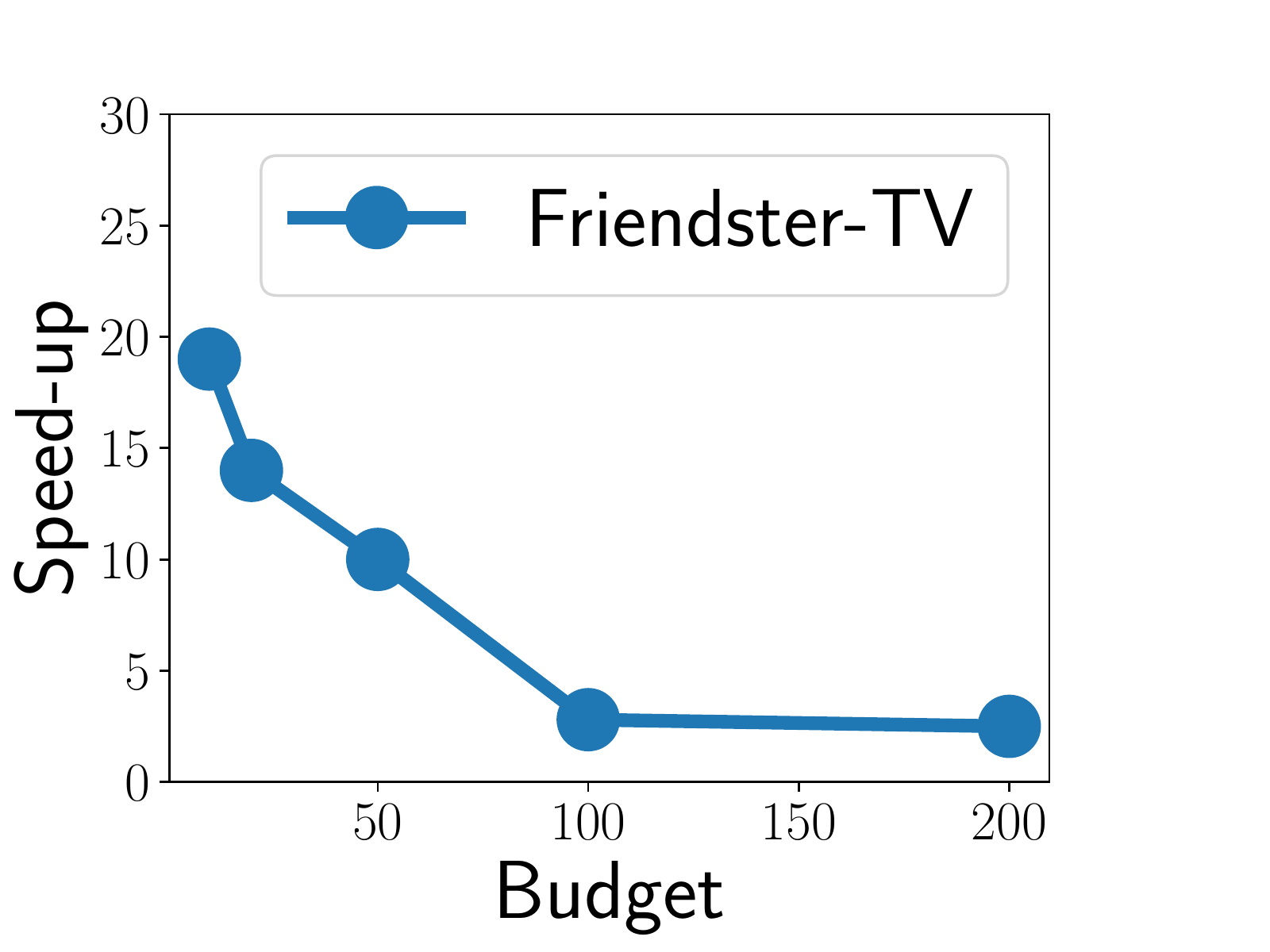}}
	
	\caption{Speed up obtained by \gc over \textsc{OPIM} }
\end{figure}

Figs.~\ref{fig:opim_speed_up_orkut}- ~\ref{fig:opim_speed_up_friendster0.} present the speed-up achieved by \gc over \textsc{OPIM} on Orkut and Friendster. Speed-up is measured as $\frac{time_{OPIM}}{time_{\gc}}$ where $time_{OPIM}$ and $time_{\gc}$ are the running times of OPIM and \gc respectively. OPIM crashes on Friendster-CO and Twitter dataset.
\subsection{Results on Max Cover Problem (MCP) on Gowalla }
\label{app:MCP_gowalla}

\begin{figure}[t]
\centering
\subfloat[Quality]{
	\label{fig:gowalla_quality_mcp}
	\includegraphics[width=1.4in]{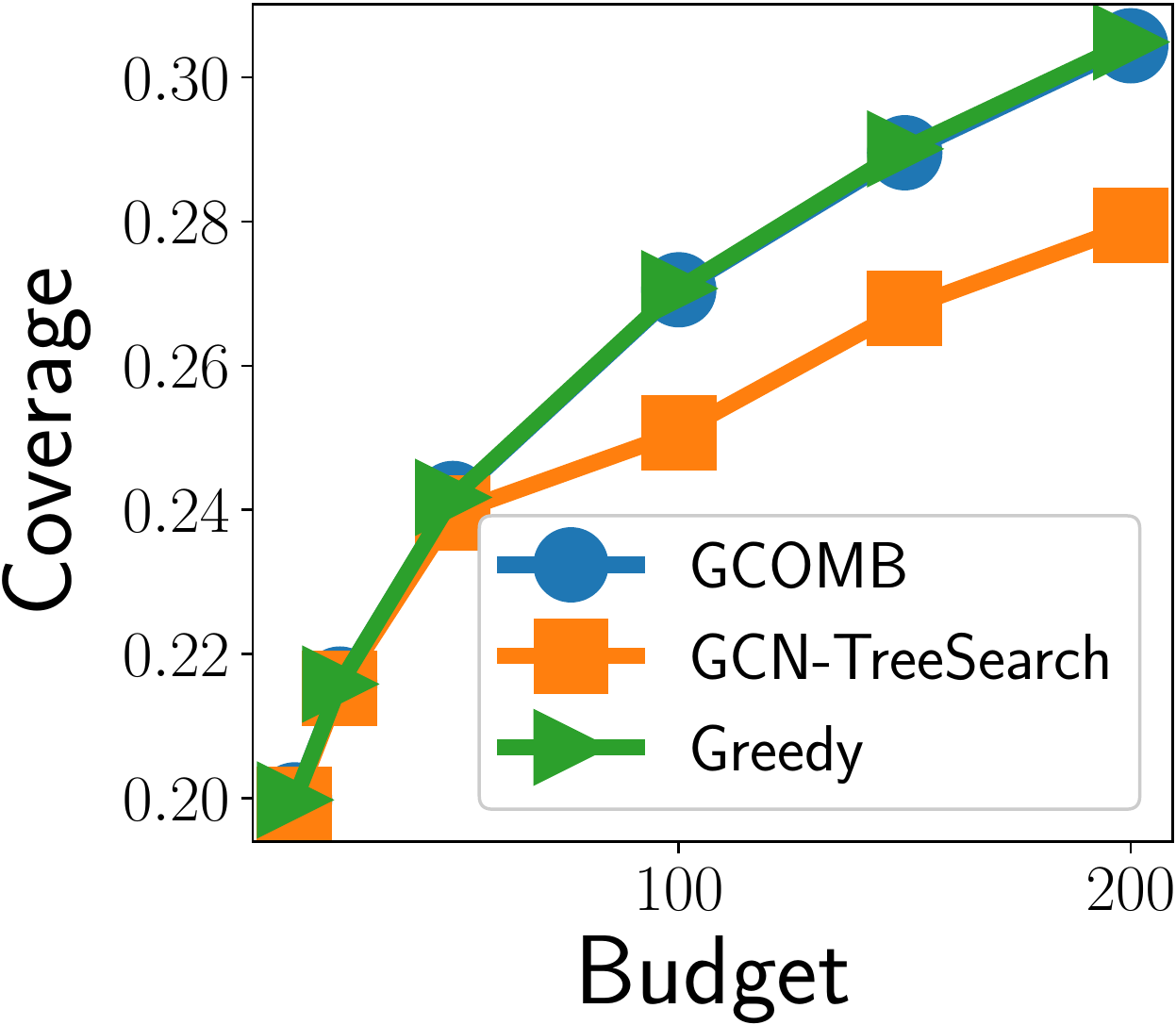}}
\subfloat[Scalability]{
	\label{fig:gowalla_running_time_mcp}
	\includegraphics[width=1.4in]{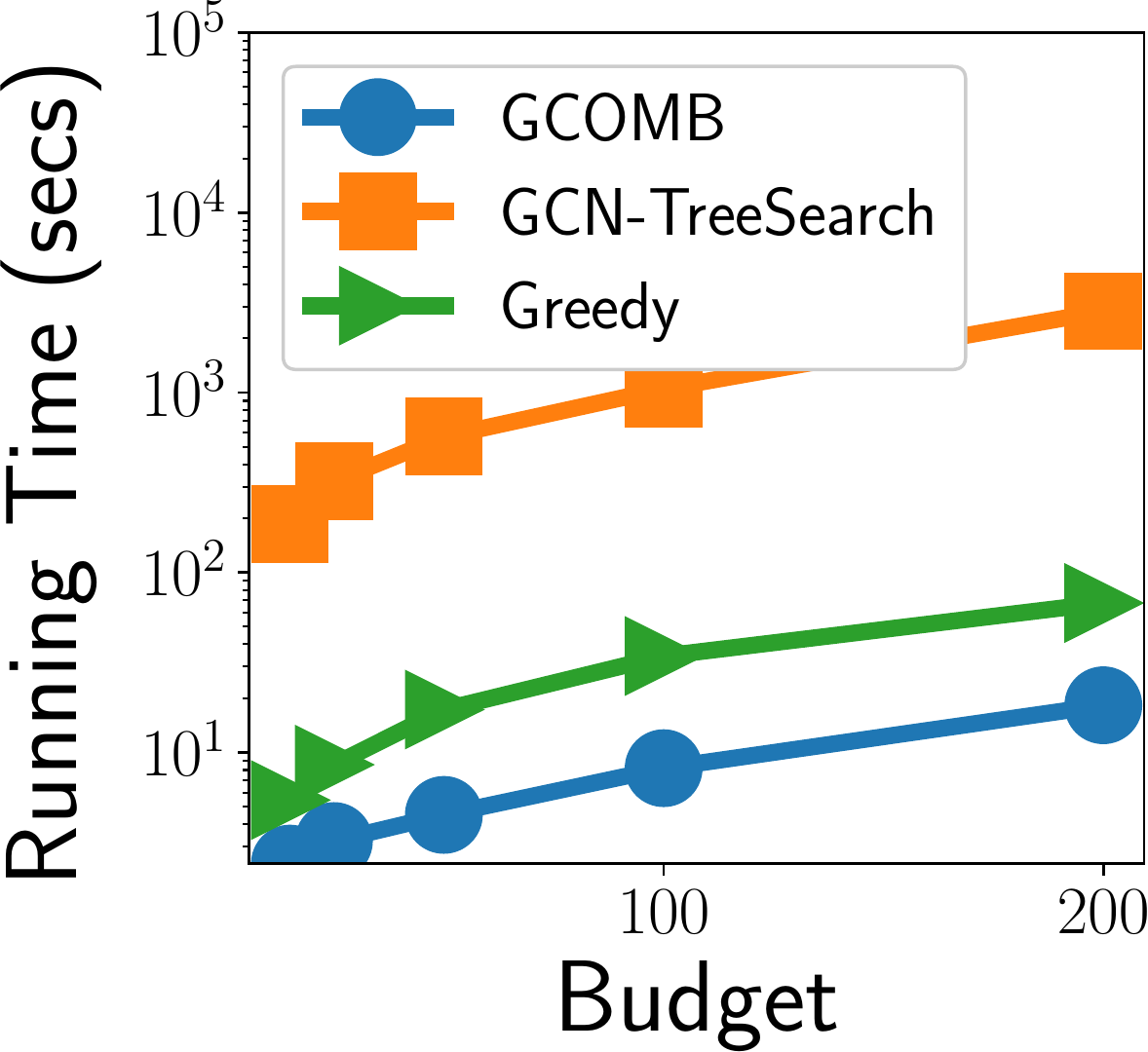}}
	
	\caption{MCP : Gowalla: a) Quality comparison of \gc and \gt against greedy. b) Running times of \gc and \gt against the greedy approach.}
\end{figure}

Fig.~\ref{fig:gowalla_quality_mcp} presents the impact of budget on Coverage on Gowalla dataset. The quality achieved by \gc is similar to Greedy, while \gt is inferior. \gc  is up to two orders of magnitude faster than \gt and $10$ times faster than Greedy as can be seen in Fig. \ref{fig:gowalla_running_time_mcp}.

\subsection{Comparison with Stochastic Greedy (SG) on MCP  }
\label{app:SG_MCP}

We compare the performance of \gc with \emph{SG} on MCP. As can be seen in Table ~\ref{table:SG_MCP}, \gc is up to $20\%$ better in quality at $\epsilon=0.2$ and yet $2$ times faster. SG fails to match quality even at $\epsilon=0.05$, where it is even slower. Furthermore, SG is not drastically faster than \textsc{Celf} in MCP due to two reasons: (1) cost of computing marginal gain is $O(Avg. degree)$, which is fast. (2) The additional data structure maintenance in SG to access sampled nodes in sorted order does not substantially offset the savings in reduced marginal gain computations.

\begin{table}[t]
\centering
\scalebox{0.7}{
\begin{tabular}{|c|p{0.6in}|p{0.9in}|p{0.9in}|}
\hline
\textbf{Budget} &\textbf{Speed-up $\epsilon=0.2$} & \textbf{Coverage Difference $\epsilon=0.2$} & \textbf{Coverage Difference $\epsilon=0.05$}\\
\hline
\textbf{20}& $2$ & $-0.09$ & $\mathbf{-0.001}$ \\
\hline
\textbf{50}& $2$ & $-0.13$ & $\mathbf{-0.003}$\\
\hline
\textbf{100}& $2$ & $-0.16$ & $\mathbf{-0.005}$ \\
\hline
\textbf{150}& $2$ & $-0.18$ & $\mathbf{-0.005}$ \\
\hline
\textbf{200}& $2$ & $-0.20$ & $\mathbf{-0.006}$ \\
\hline
\end{tabular}}\hspace{0.5in}
\caption{ Comparison with Stochastic Greedy(SG) algorithm. The $\epsilon$ parameter controls the accuracy of SG. A negative number means \gc is better than SG.}
\label{table:SG_MCP}
 \end{table}

\subsection{Results on Max Vertex Cover (MVC)}
\label{app:mvc}

\begin{figure}[t]
\centering
\subfloat[Gowalla]{
	\label{fig:gowalla_time_mvc}
	\includegraphics[width=1.33in]{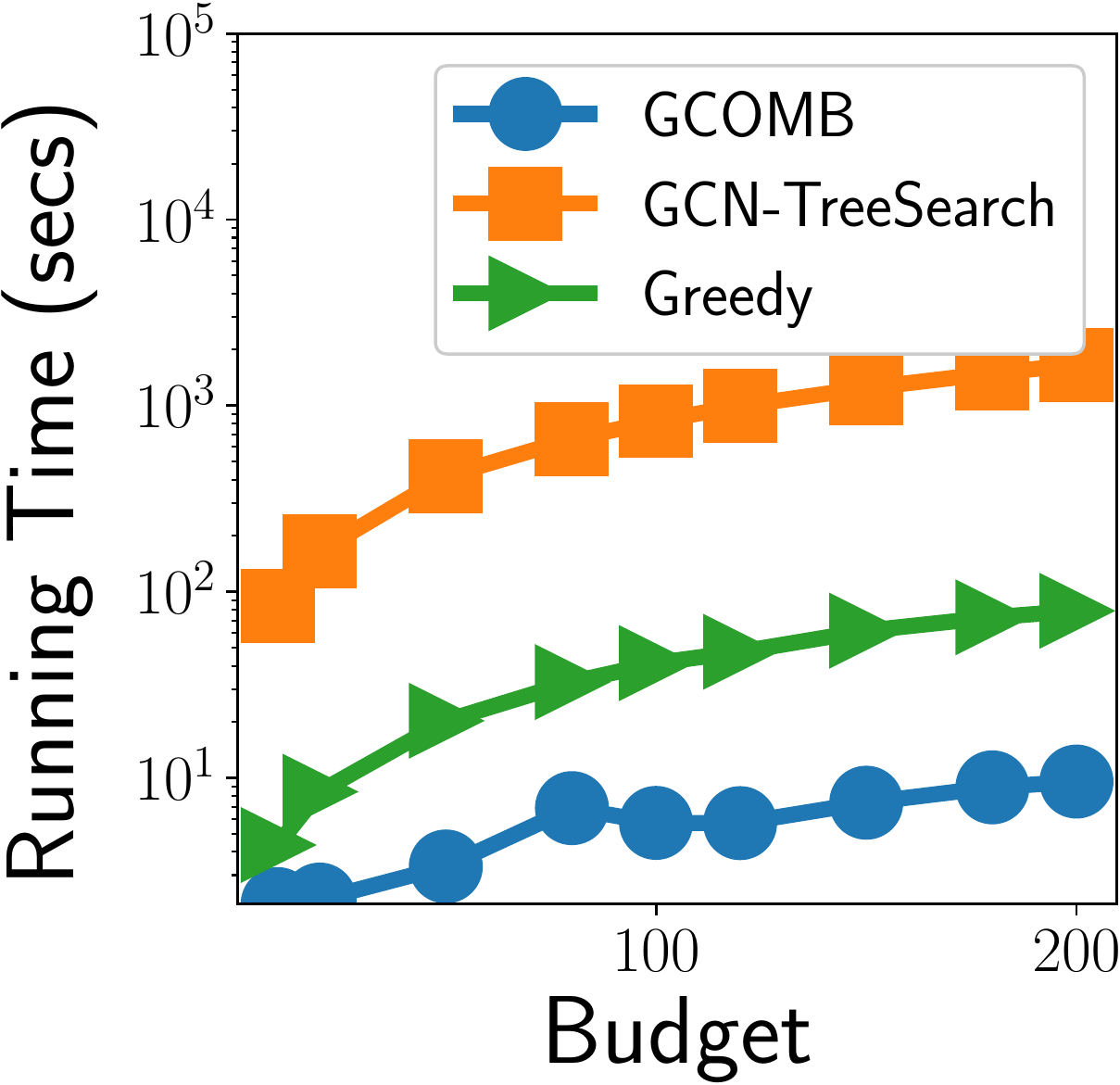}}
\subfloat[YouTube]{
	\label{fig:yt_time_mvc}
	\includegraphics[width=1.33in]{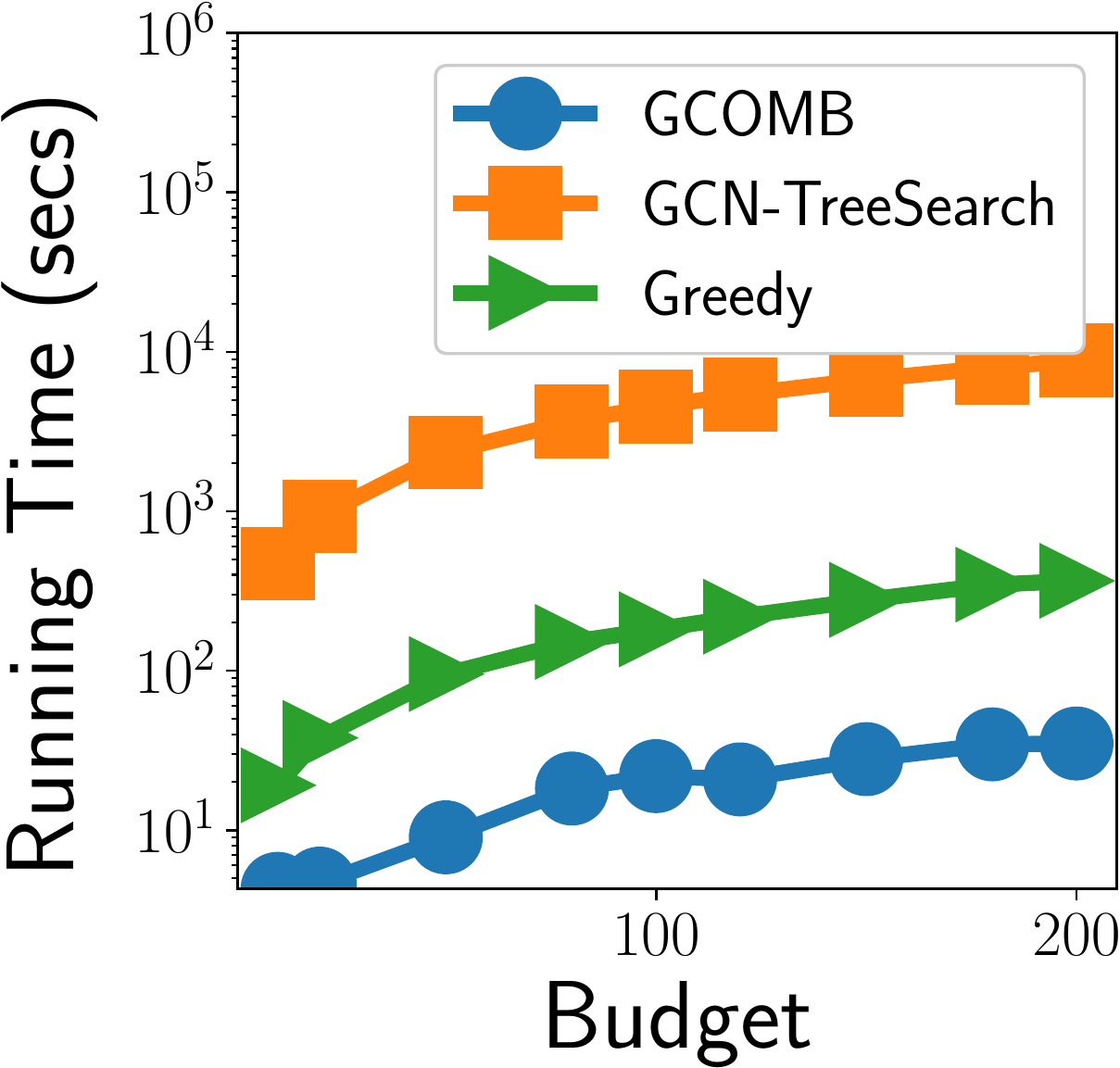}}
\subfloat[Twitter-ego]{
	\label{fig:twt-ego_mvc}
	\includegraphics[width=1.33in,height=1.28in]{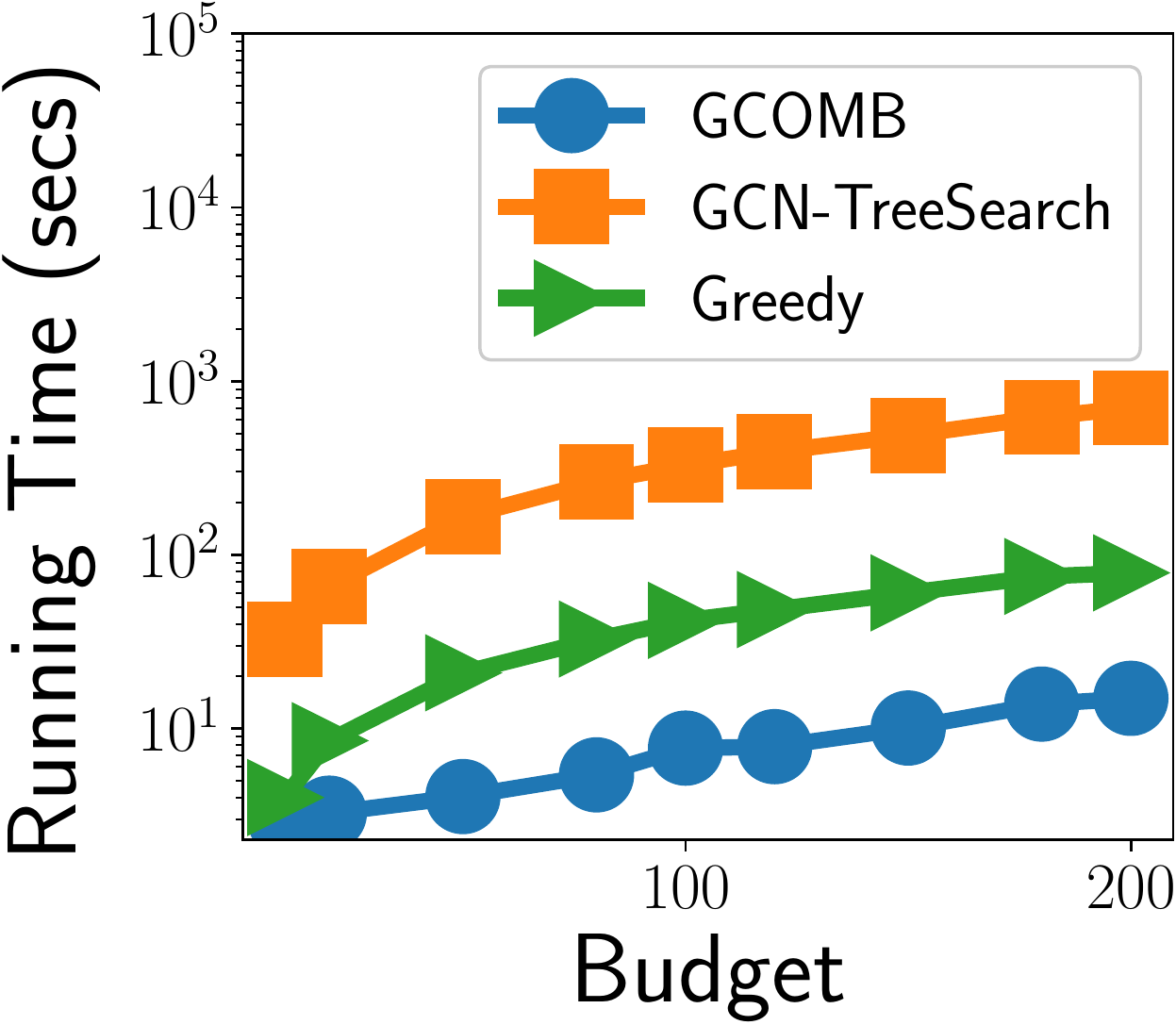}}
	
	\caption{MVC: Running times of \gc and \gt against the greedy approach in (a) Gowalla  (b) YouTube and c) Twitter-Ego.}
\end{figure}

\begin{table}[t]
\centering
\scalebox{0.8}{
\begin{tabular}{| c|c |c | c | c |c|}
\hline
\textbf{Graph}& \textbf{Greedy}& \textbf{\sv} & \textbf{\gt}  & \textbf{\gc} \\
\hline
\textbf{BA-10k} & $\textbf{0.11}$ & $0.096$ & $0.109$ & $\textbf{0.11}$\\
\hline
\textbf{BA-20k}& $\textbf{0.0781}$ & $0.0751$ & $\textbf{0.0781}$ & $\textbf{0.0781}$ \\
\hline
\textbf{BA-50k} & $\textbf{0.0491}$& $NA$& $0.0490$ & $\textbf{0.0491}$ \\
\hline
\textbf{BA-100k} & $\textbf{0.0346}$ & $NA$ & $0.0328$ & $\textbf{0.0346}$\\
\hline
\textbf{Gowalla} & $\textbf{0.081}$ & $NA$ & $\textbf{0.081}$ & $\textbf{0.081}$\\
\hline
\textbf{YouTube} & $\textbf{0.060}$& $NA$ & $\textbf{0.060}$ & $\textbf{0.060}$\\
\hline
\textbf{Twitter-ego} & $\textbf{0.031}$& $NA$ & $\textbf{0.031}$ & $\textbf{0.031}$\\
\hline
\end{tabular}}
\caption{Coverage achieved in the Max Vertex Cover (MVC) problem. The best result in each dataset is highlighted in bold. 
\label{table:mvc_gcomb_soa}}
\end{table}

To benchmark the performance in MVC, in addition to real datasets, we also use the Barabási–Albert (BA) graphs used in \sv \cite{dai2017learning}.

\textbf{Barabási–Albert (BA):} In BA, the default edge density is set to $4$, i.e., $|E|=4|V|$. We use the notation BA-$X$ to denote the size of the generated graph, where $X$ is the number of nodes.

For synthetic datasets, all three techniques are trained on BA graphs with $1k$ nodes. For real datasets, the model is trained on Brightkite. Table \ref{table:mvc_gcomb_soa} presents the coverage achieved at $b=30$. Both \gc and \gt produce results that are very close to each other and slightly better than \sv. As in the case of MCP, \sv ran out of memory on graphs larger than BA-20k. 

To analyze the efficiency, we next compare the prediction times of \gc with Greedy (Alg~\ref{alg:greedy}) and \gt. Figs.~\ref{fig:gowalla_time_mvc}-\ref{fig:twt-ego_mvc} present the prediction times against budget. Similar to the results in MCP, \gc is one order of magnitude faster than Greedy and up to two orders of magnitude faster than \gt. 

\begin{figure}[t]
\subfloat[TW and FS]{
	\label{fig:vb}
	\includegraphics[width=1.42in]{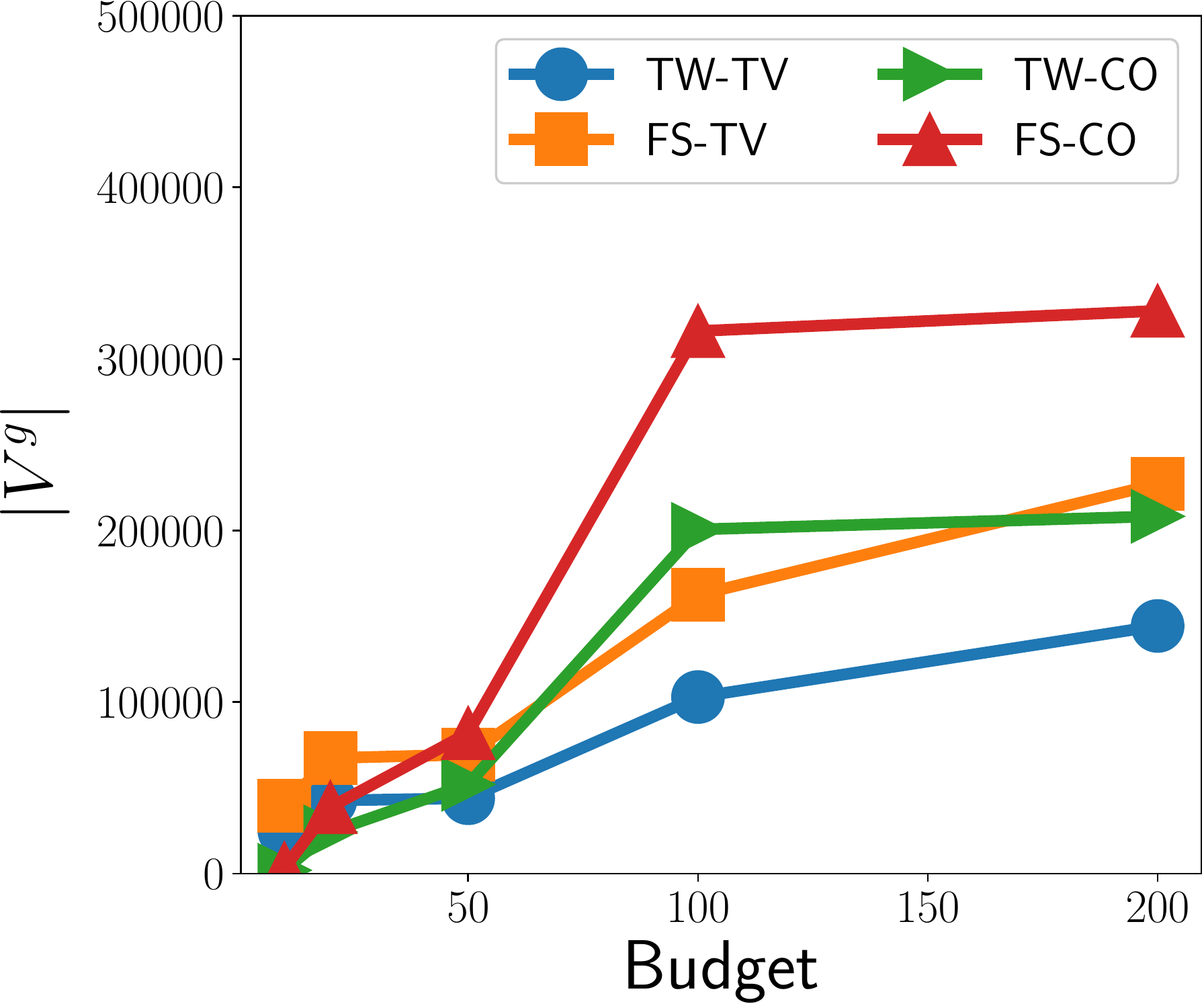}}
\subfloat[Quality Vs. Training]{
	\label{fig:trainingquality}
	\includegraphics[width=1.26in]{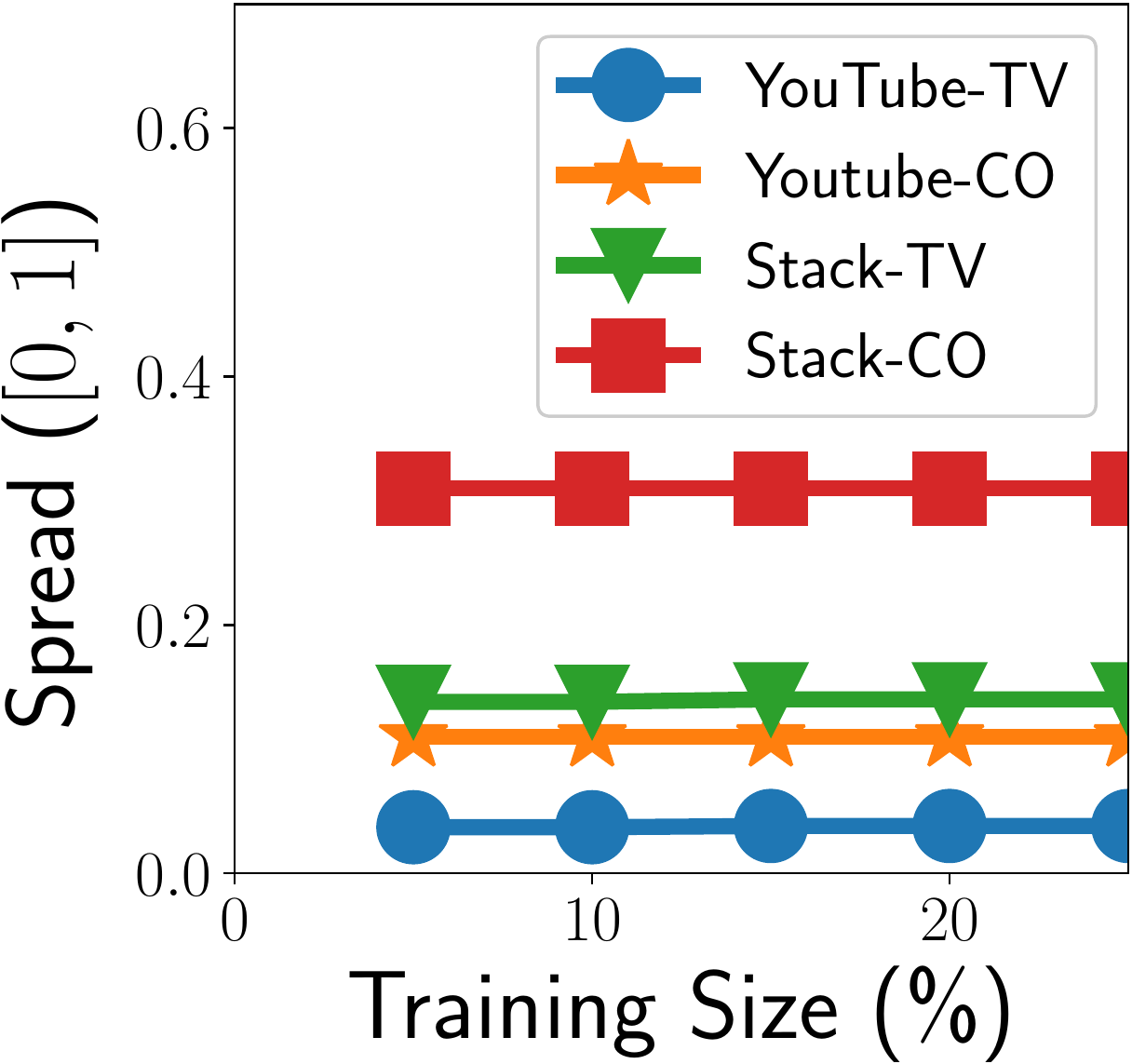}}
\subfloat[Gowalla]{
	\label{fig:sampling}
	\includegraphics[width=1.33in]{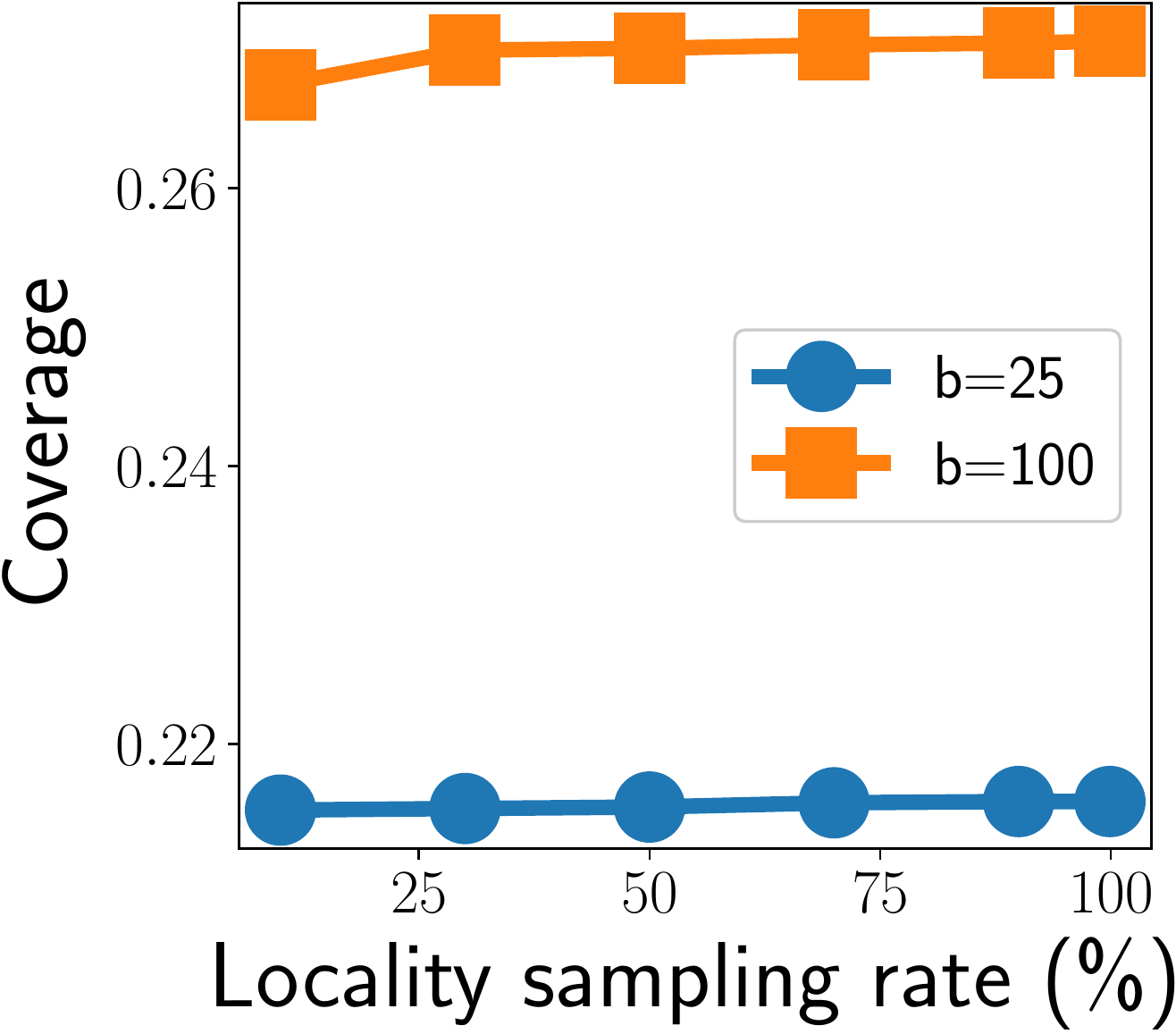}}
\subfloat[Gowalla]{
	\label{fig:dimension}
	\includegraphics[width=1.33in]{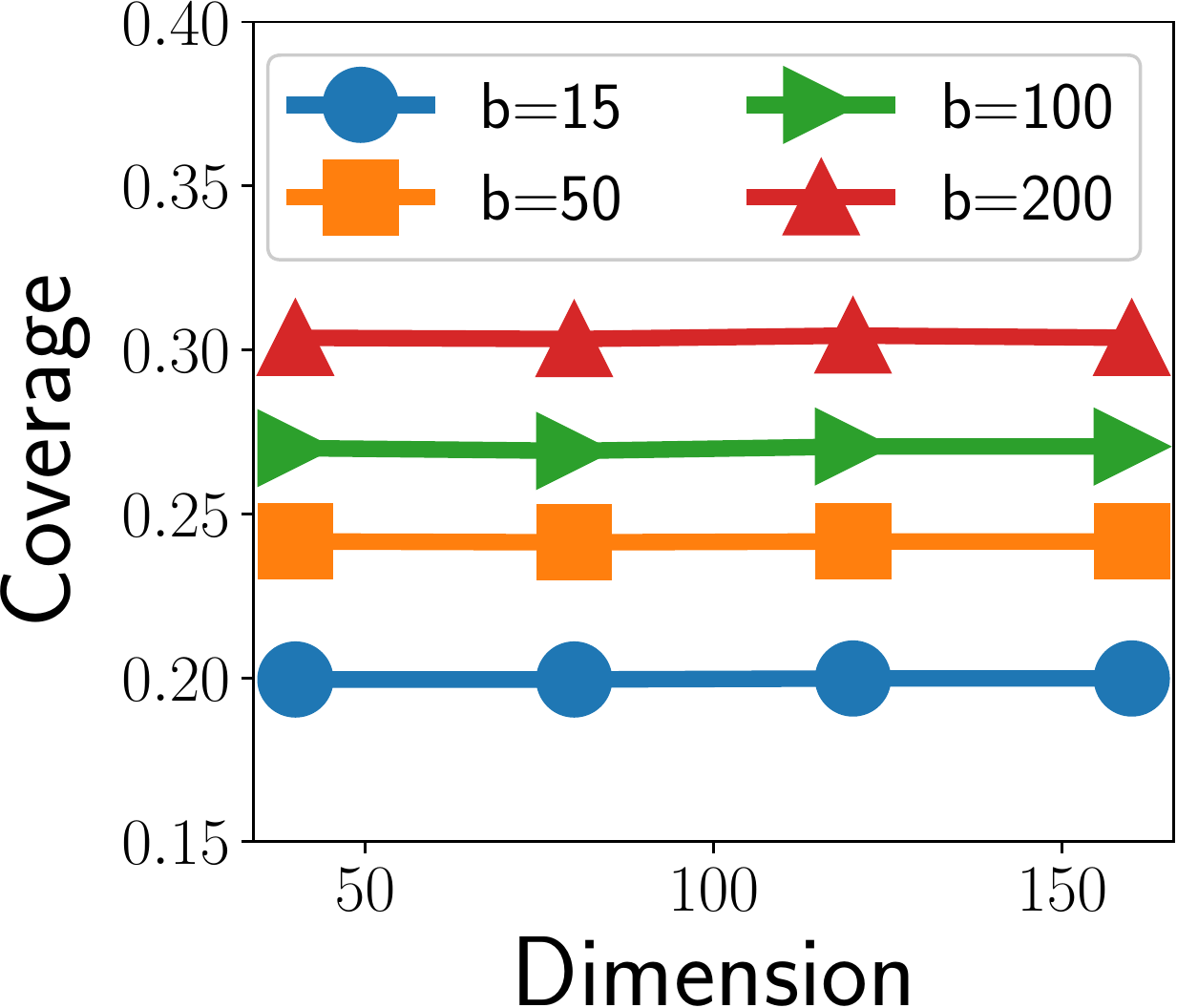}}
\caption{(a) Number of nodes included in $V^g$ in Twitter and Friendster for different budgets $b$. (b) Impact of training set size on spread quality in IM. (c-d) Effect of sampling rate and embedding dimension across different budgets on MCP coverage.}
\end{figure}

\vspace{-0.10in}
\subsection{Impact of Parameters}
\label{app:parameters_exp}
\subsubsection{Size of training data} In Fig.~\ref{fig:trainingquality}, we evaluate the impact of training data size on expected spread in IM. The budget for this experiment is set to 20. We observe that even when we use only $5\%$ of YT to train, the result is almost identical to training with a $25\%$ subgraph. This indicates \gc is able to learn a generalized policy even with small amount of training data.
\vspace{-0.05in}

\subsubsection{Effect of sampling rate }We examine how the sampling rate in locality computation affects the overall coverage in MCP. In Fig. \ref{fig:sampling}, with the increase of samples, the accuracy at $b=100$ increases slightly. At $b=25$, the increase is negligible. This indicates that our sampling scheme does not compromise on quality.
\subsubsection{Dimension} 
We vary the GCN embedding dimension from $40$ to $160$ and measure its impact on coverage in MCP (Fig.~\ref{fig:dimension}). We observe minute variations in quality, which indicates that \gc is robust and does not require heavy amount of parameter optimization.

\end{document}